\documentclass[journal]{IEEEtran}

\usepackage[utf8]{inputenc}
\usepackage{amsmath,amsfonts}
\usepackage{amssymb}
\usepackage{mathtools}
\usepackage{algorithmic}
\usepackage{algorithm}
\usepackage{array}
\usepackage{subcaption}
\usepackage{booktabs}
\usepackage{tabularx}
\usepackage{textcomp}
\usepackage{stfloats}
\usepackage{url}
\usepackage{graphicx}
\usepackage{cite}
\usepackage{amsthm}
\theoremstyle{plain}
\newtheorem{theorem}{Theorem}[section]
\newtheorem{lemma}{Lemma}
\newtheorem{proposition}[theorem]{Proposition}
\theoremstyle{definition}
\newtheorem{definition}[theorem]{Definition}
\newtheorem{assumption}[theorem]{Assumption}

\renewcommand{\IEEEQEDclosed}{}
\newcolumntype{Y}{>{\raggedright\arraybackslash}X}
\usepackage{hyperref}
\begin{document}

\title{Dual-Flow Reinforcement Learning with State-Aware Exploration}

\author{Qijun Li, Zheng Fu, Qi Song, Yifei He, Weitao Zhou, Hao Gao, Kun Jiang, and Diange Yang
\thanks{This work was supported in part by the National Natural Science Foundation of China (52394264, 52472449, 52372414), Independent Research Project of the State Key Laboratory of Intelligent Green Vehicle and Mobility, Tsinghua University (No. ZZ-PY-20250408), the Tsinghua University-Toyota Joint Center, and Tsinghua University–SAIC GM Wuling Joint Research Center.}
\thanks{Qijun Li, Zheng Fu, Qi Song, Yifei He, Weitao Zhou, Kun Jiang, and Diange Yang are with the School of Vehicle and Mobility and the State Key Laboratory of Intelligent Green Vehicle and Mobility, Tsinghua University, Beijing 100084, China (e-mail: lqj25@mails.tsinghua.edu.cn; fuzheng039@gmail.com; qisong@link.cuhk.edu.cn; heyf25@mails.tsinghua.edu.cn; zhouwt801@gmail.com; jiangkun@tsinghua.edu.cn; ydg@mail.tsinghua.edu.cn).}
\thanks{Hao Gao is with the Nanjing University of Posts and Telecommunications, Nanjing, China (e-mail: tsgaohao@gmail.com).}
\thanks{(Corresponding author: Diange Yang, Zheng Fu.)}
\thanks{This work has been submitted to the IEEE for possible publication. Copyright may be transferred without notice, after which this version may no longer be accessible.}}

\markboth{}%
{Li \MakeLowercase{\textit{et al.}}: Dual-Flow Reinforcement Learning with State-Aware Exploration}

\maketitle

\begin{abstract}
In complex continuous-control reinforcement learning tasks, multimodal optimal actions often coincide with uncertain, multimodal return distributions, making reliable value estimation and multimodal exploration challenging.
Existing value estimation methods using unimodal Gaussians restrict expressiveness and yield biased estimates. Recent generative policies can represent multimodal actions but often collapse to a few modes and under-explore high-value areas of the action space.
Motivated by these challenges, we propose Dual-Flow RL, a unified actor-critic framework that jointly models a continuous return distribution and a multimodal policy distribution using conditional flow matching (CFM). This design supports reliable value estimation and sustained multimodal exploration.
To further enhance exploration, we introduce an Entropy-Covariance Exploration Regulator (ECER) that enables state-aware exploration regulation leveraging policy entropy and action-uncertainty covariance.
Experiments on DeepMind Control Suite and Humanoid-Bench show that Dual-Flow RL achieves state-of-the-art performance on most tasks, significantly outperforming prior diffusion-based and flow-based methods.
\end{abstract}

\begin{IEEEkeywords}
Continuous control, distributional reinforcement learning, exploration, flow matching, generative policy.
\end{IEEEkeywords}

\section{Introduction}

\IEEEPARstart{R}{ecent} years have witnessed the widespread application of reinforcement learning (RL) to continuous control tasks \cite{wei2026scalable,10879126}, with broad applications to real-world domains such as industrial control \cite{liu2025ekg} and autonomous driving \cite{ning2026optimal,wijaya2024blockchain}. To obtain optimal action sequences, an RL model is expected to capture a multimodal policy distribution while ensuring diversity among high-value modes. However, traditional deterministic or diagonal Gaussian policies \cite{haarnoja2018soft} are inherently unimodal, which inevitably confines their sampling probability to low-value transitional regions between modes. Moreover, conventional value estimation methods in RL typically regress toward a single expected return \cite{fujimoto2018addressing, kumar2020conservative}, failing to represent multimodal structure of the value function and thus often converging to spurious local optima.
\begin{figure}[!t]
	\centering
	\includegraphics[width=0.49\textwidth]{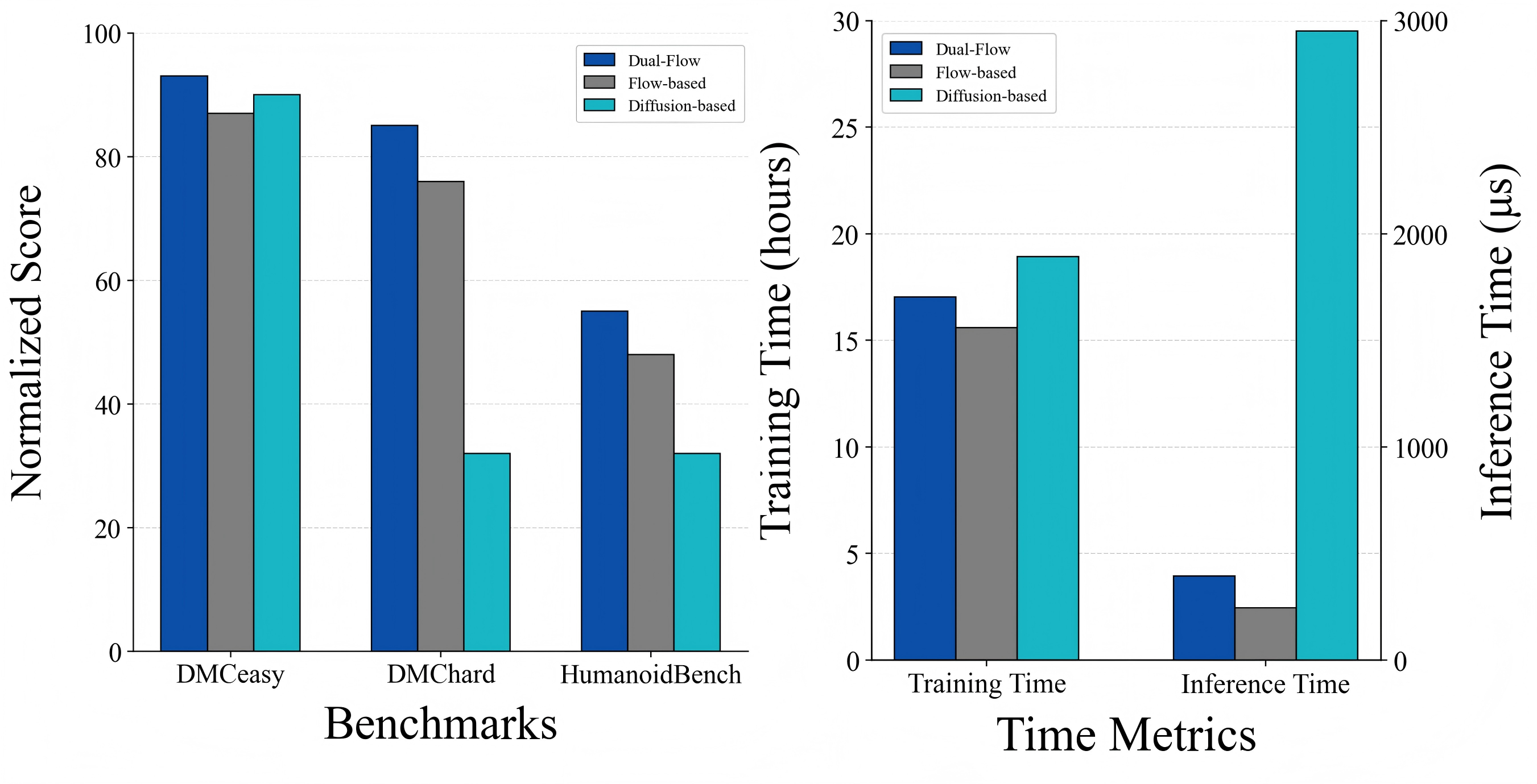}
	\caption{Comparison of performance and efficiency across benchmark groups.
		\textit{Left:} Normalized scores for Dual-Flow, flow-based (FlowRL), and diffusion-based (QVPO) methods on DMCeasy (the three Walker tasks), DMChard (other DMC tasks), and HumanoidBench.
		\textit{Right:} Computational efficiency on the \textit{Dog-run} task: 2M-step training time and single env step inference
		time.}
	\label{fig:norm_score}
\end{figure}

To address these limitations, recent work has introduced separate improvements, focusing on either the value estimation or the policy representation side alone. On the value side, a line of research employs parametric distribution families \cite{duan2021distributional} or quantile-based approximations \cite{dabney2018distributional} to model the distribution of returns, rather than a single expected value. However, such parametric approaches remain confined to a fixed distributional form (e.g., Gaussian), which limits their representational flexibility. On the policy side, recent studies try to parameterize complex action distributions via generative models, such as flow-based or diffusion-based architectures \cite{brahmanage2023flowpg, wang2024diffusion}. These methods enable effective modeling of multimodal policy distributions. Nevertheless, the exploration mechanisms used within these generative policies, like temperature tuning and entropy-guided noise scaling, are typically global and fixed, making it difficult to achieve state-adaptive and controllable exploration.
Overall, existing approaches are constrained in two key aspects: (1) limited expressive capacity
in value estimation, and (2) non-adaptive exploration mechanisms in generative policies.

Specifically, we present a unified framework that jointly models a continuous return distribution and a multimodal policy distribution via CFM. This design supports more accurate and reliable value estimation and sustained multimodal exploration. We further derive a value signal from the distributional critic to score actions and weight the alignment objective, guiding the policy toward actions preferred by the learned return distribution while preserving multimodal exploration.
To the best of our knowledge, this work is the first to jointly parameterize both the policy and the value function with flow-based models and train both via flow matching in a unified framework. Additionally, we propose an Entropy-Covariance Exploration Regulator (ECER) attached to the policy network to facilitate state-aware and controllable exploration. The ECER adaptively regulates state-aware exploration by integrating policy entropy and action-uncertainty covariance, thereby maintaining multimodal exploration and stabilizing the training process.

We evaluate our method on DeepMind Control Suite (DMC) \cite{tassa2018deepmind}, Humanoid-Bench (H-Bench) \cite{sferrazza2024humanoidbench} and MuJoCo Gym (MuJoCo) \cite{brockman2016openai}.
Fig.~\ref{fig:norm_score} provides an overview across the benchmark groups and shows that Dual-Flow consistently outperforms diffusion- and flow-based baselines.
We further conduct extensive evaluations on these benchmarks. Notably, Dual-Flow RL achieves state-of-the-art performance, surpassing FlowRL \cite{lv2025flow} by 31.6\% and SAC \cite{haarnoja2018soft} by 112.3\% on \textit{Humanoid-run} in DMC.

\section{Related Work}
\textbf{Distributional Critic in RL.}
Distributional critic reinforcement learning models the full return distribution rather than only the expected return. Early methods such as C51 \cite{bellemare2017distributional} approximate the return distribution with a categorical distribution over a fixed discrete support. Quantile-based methods like QR-DQN \cite{dabney2018distributional} and IQN \cite{dabney2018implicit} model return distributions via fixed or continuously sampled quantiles, but limited quantile resolution and truncation can introduce approximation errors in the distribution tails.
In continuous-action control, D4PG \cite{barth2018distributed} combines distributional critics with deterministic policy gradients and prioritized replay. In offline reinforcement learning, CODAC \cite{ma2021conservative} mitigates extrapolation error by imposing conservative penalties on quantile estimates of out-of-distribution actions.
A more lightweight line of work introduces explicit parametric assumptions on $Z(s,a)$ to simplify learning. For instance, DSAC \cite{duan2021distributional} models return distributions as Gaussians, which restricts expressiveness to unimodal moment matching and fails to capture multimodal or heavy-tailed returns.
In contrast, recent work has explored generative modeling approaches to directly learn return distributions. Bellman Diffusion employs diffusion models to represent continuous return distributions \cite{li2024bellman}, while normalizing-flow-based methods support unbounded support and offer greater expressive power for modeling multimodal and heavy-tailed structures \cite{kaddah2025flow}. Despite their expressiveness, these approaches typically rely on Gaussian base distributions or higher-order density field estimation, leading to increased computational overhead and greater sensitivity to hyperparameter choices.

\textbf{Generative Policies in RL.}
Generative policies aim to address the limited expressivity of deterministic policies in multimodal scenarios. Early work explored deep generative models such as VAEs, EBMs, and GANs to represent multimodal action distributions \cite{fujimoto2019off,kostrikov2021offline,ho2016generative}.
In recent years, diffusion-based policies have become widely used for modeling complex multimodal action distributions. A variety of methods integrate value information into the denoising process. Approaches such as QVPO \cite{ding2024diffusion} and QIPO \cite{zhang2025energy} leverage value signals to shape the evolving action distribution along the intermediate denoising steps. In contrast, others impose value constraints primarily on the final generated actions, as exemplified by Diffusion-QL \cite{wang2023diffusion}, QSM \cite{psenka2023learning} and DiffCPS \cite{he2024diffcps}.
More recently, flow-based models emerged as an effective tool for constructing expressive policy distributions. Flow Q-Learning (FQL) incorporates flow models as behavior-cloning priors for offline and offline-to-online settings \cite{park2025flow}, while FlowRL reparameterizes policies with flow models and trains them through constrained policy search and Wasserstein regularization \cite{lv2025flow}. ReinFlow \cite{zhang2025reinflow} introduces an online fine-tuning framework that injects learnable noise into deterministic generation paths to stabilize policy gradients, and SAC Flow \cite{zhang2025sac} examines gradient stability in off-policy learning and proposes sequential modeling and reparameterization strategies. Beyond control-focused reinforcement learning, Flow-GRPO, DanceGRPO, and MixGRPO extend flow and rectified-flow models to image and video generation \cite{liu2025flow,xue2025dancegrpo,li2025mixgrpo}.

\textbf{Exploration Regulation in RL.}
Exploration regulation in continuous-control RL is commonly implemented by controlling policy stochasticity. SAC \cite{haarnoja2018soft} regulates exploration through entropy regularization with an automatically adjusted temperature. gSDE \cite{raffin2020generalized} further introduces state-dependent exploration by modulating action noise through a state-conditioned transformation. Similar structured-exploration approaches also incorporate correlated or state-conditioned noise to shape exploration in continuous control \cite{chiappa2023latent, hollenstein2024colored}.
Building on these foundations, recent studies have investigated exploration regulation for generative policies. DACER \cite{wang2024diffusion} estimates action entropy by fitting a GMM to multiple samples per state to adaptively scale exploration noise. However, existing exploration regulation methods generally rely on global entropy schedules or fixed noise structures, offering only limited state-aware control and lacking fully closed-loop, adaptive adjustment.

\section{Preliminaries}
\subsection{Distributional Reinforcement Learning}
Notation. We consider an MDP $(\mathcal{S},\mathcal{A},P,R,\gamma)$ with continuous $\mathcal{S}$ and $\mathcal{A}$, transition density $P(s'|s,a)$, reward $R(s,a)$, and discount $\gamma\in[0,1)$. We write $r_t \triangleq R(s_t,a_t)$ and let $\pi(a|s)$ be a stochastic policy. We denote by $Z^\pi(s,a)$ the state--action return distribution under $\pi$.

The state--action return distribution under $\pi$ can be defined as
\begin{equation}
	\begin{aligned}
		&Z^\pi(s_t, a_t) \triangleq \sum_{k=0}^{\infty} \gamma^k r_{t+k}, \\
		&s_{t+1} \sim P(\cdot|s_t, a_t),  a_{t+1} \sim \pi(\cdot|s_{t+1}).
	\end{aligned}
\end{equation}
The conventional action-value function $Q^\pi$ is obtained as the expectation of the return random variable: $Q^\pi(s, a) = \mathbb{E}[Z^\pi(s, a)]$.

The distributional Bellman operator $\mathcal{T}_D^\pi$ is defined as:
\begin{equation}
	\begin{aligned}
		&(\mathcal{T}_D^\pi Z)(s, a) \stackrel{D}{=} r(s, a) + \gamma Z(s', a'),\\
		& s' \sim P(\cdot | s, a), a' \sim \pi(\cdot | s'),
	\end{aligned}
\end{equation}
where $\stackrel{D}{=}$ denotes equality in distribution.
The goal of distributional RL is to match the return-distribution critic $Z$ with the Bellman target distribution induced by the operator $\mathcal{T}_D^\pi$. Given replay data $\mathcal{D}$, the objective is formulated as:
\begin{equation}
	\begin{aligned}
		Z_{\text{new}} = \underset{Z \in \mathcal{Z}}{\arg \min} \mathbb{E}_{(s,a) \sim \mathcal{D}} \left[ d \big( Z(s,a), (\mathcal{T}^\pi Z^-)(s,a) \big) \right],
	\end{aligned}
\end{equation}
where $d(\cdot, \cdot)$ is a distributional discrepancy, and $Z^-$ denotes a delayed copy used to form a stable Bellman target for training.

\subsection{Flow Matching}
Flow Matching (FM) learns a time-dependent velocity field $v(x, t)$ that transports a base distribution $p_0$ to a target distribution $p_1$ over $t \in [0, 1]$. The objective is to construct a reference trajectory from the base distribution to the target distribution, governed by the continuity equation.

For paired samples $(x_0, x_1)$ with $x_0 \sim p_0$ and $x_1 \sim p_1$, we define the linear interpolation path:
\begin{equation}
	\begin{aligned}
		x_t = \psi_t(x_0, x_1), \quad \psi_t(x_0, x_1) = (1 - t)x_0 + tx_1.
	\end{aligned}
\end{equation}

The corresponding reference velocity is:
\begin{equation}
	\begin{aligned}
		u_t(x_0, x_1) = \frac{\partial}{\partial t} \psi_t(x_0, x_1).
	\end{aligned}
\end{equation}
To train the velocity field, Flow Matching minimizes the following loss function:
\begin{equation}
	\begin{aligned}
		\mathcal{L}_{FM}
		= \mathbb{E}_{\substack{t \sim \mathcal{U}[0,1] \\
				x_0 \sim p_0,\; x_1 \sim p_1}}\left[\left\| v(x_t,t) - u_t(x_0,x_1) \right\|_2^2\right].
	\end{aligned}
\end{equation}

Minimizing this objective enables the learned flow to transport samples from $p_0$ toward the target distribution $p_1$.

\section{Methodology}
Here, we detail our flow-based distributional critic and policy, both defined via probability-flow ODEs. We use the distributional critic to support policy improvement, and further introduce ECER to adapt exploration based on policy entropy and action-uncertainty covariance.
\subsection{Flow-based Distributional Critic and Policy}

For value evolution, we model the critic as a state--action conditional return distribution $Z(s, a)$ on the scalar return axis. For any $(s, a)$, we draw a base sample $z_0 \sim \mathcal{N}(0, 1)$ and evolve it over $t \in [0, 1]$ via ODE:
\begin{equation}
	\frac{dz_t}{dt} = v_z(s, a, z_t, t), \quad t \in [0, 1],
\end{equation}
where $v_z$ is the return velocity field. The return sample $z_1 \sim Z(s, a)$ is then obtained by:
\begin{equation}
	z_1 = z_0 + \int_0^1 v_z(s, a, z_t, t) dt.
\end{equation}
The terminal sample $z_1$ thus follows the return distribution $Z(s, a)$.

For the policy, we build a symmetric flow-based policy and introduce an exploration regulator to enhance exploration.

Given a state $s$, Policy-flow draws a base variable $a_0 \sim p_0 = \mathcal{N}(0, I)$ and evolves the action trajectory on $t \in [0, 1]$ via the flow ODE:
\begin{equation}
	\label{9}
	\frac{da_t}{dt} = v_{\pi}(s, a_t, t), \quad t \in [0, 1],
\end{equation}
where $v_{\pi}$ is the policy velocity field. The policy action $a_\pi$ is then defined as the terminal state $a_1$ through integration:
\begin{equation}
	\label{10}
	a_\pi = a_1 = a_0 + \int_0^1 v_{\pi}(s, a_t, t) dt.
\end{equation}

To modulate exploration, we introduce an Entropy-Covariance Exploration Regulator (ECER) atop the flow-based policy. ECER augments the deterministic mapping $a_\pi$ by applying a learned exploration scale $\sigma(s) \in \mathbb{R}^d_+$. Specifically, the agent executes an exploration-augmented action $a_e$:
\begin{equation}
	\label{11}
	\epsilon \sim \mathcal{N}(0, I), \quad a_e = a_\pi + \sigma(s) \odot \epsilon.
\end{equation}
The scale $\sigma(s)$ is produced by a learned regulator, trained using policy entropy and action-uncertainty covariance in Sec.~\ref{4.3}. This enables adaptive, state-aware control of exploration intensity.

\subsection{Flow-based Policy Iteration}
\label{sec:policy_iteration}
For policy evaluation, given any conditional return distribution $Z(\cdot \mid s, a)$ and a fixed policy $\pi$, we define the one-step distributional TD operator that maps $Z$ to a target distribution at $(s, a)$:
\begin{equation}
	\label{12}
	\begin{aligned}
		&(\mathcal{T}^\pi Z)(\cdot \mid s, a) \stackrel{d}{=} r(s, a) + \gamma Z(\cdot \mid s', a'), \\
		&s' \sim P(\cdot \mid s, a), a' \sim \pi(\cdot \mid s').
	\end{aligned}
\end{equation}
We define the resulting TD target distribution as:
\begin{equation}
	\label{13}
	p_{\text{td}}(\cdot \mid s, a) \doteq (\mathcal{T}^\pi Z)(\cdot \mid s, a).
\end{equation}
We evaluate the current policy by fitting $Z(\cdot \mid s, a)$ to the TD target distribution $p_{\text{td}}(\cdot \mid s, a)$ through CFM along the linear path between base noise and TD targets. Concretely, draw $z_0 \sim \mathcal{N}(0, 1)$, $t \sim \mathcal{U}[0,1]$, $y \sim p_{\text{td}}(\cdot \mid s, a)$, and define
\begin{equation}
	z_t = (1 - t)z_0 + ty, \quad u_z = y - z_0.
\end{equation}
The distributional evaluation objective is
\begin{equation}
	\label{15}
	\min \mathbb{E}_{(s, a, r, s') \sim \mathcal{D}} \mathbb{E}_{z_0, t, y} \left[ \| v_z(s, a, z_t, t) - u_z \|_2^2 \right].
\end{equation}
By optimizing Eq.~\eqref{15} to its optimum, we obtain a Bellman-consistent return distribution under the distributional TD operator $\mathcal{T}^\pi$, as stated in the following assumption and proposition.

\begin{assumption}[CFM realizability]
\label{ass:4.1}
There exists a velocity field $v_z^*(t, z)\in \mathcal{V}$ that attains the global minimum of the flow-matching loss.
\end{assumption}

\begin{assumption}[Lipschitz continuity for critic]
\label{ass:4.2}
The optimal velocity field $v_z^*(t, z)$ is continuous and bounded, and satisfies Lipschitz condition: $\|v_z(t, z) - v_z(t, z')\| \le L_{v}^{z} \|z - z'\| ,\forall t \in [0, 1], \forall z, z' \in \mathbb{R}.$
\end{assumption}

\begin{proposition}[Distributional Bellman consistency]
\label{prop:4.3}
Let $p_{\text{td}}(\cdot \mid s, a) \doteq (\mathcal{T}^\pi Z)(\cdot \mid s, a)$ be defined by Eqs.~\eqref{12}--\eqref{13}. If the distributional evaluation objective Eq.~\eqref{15} attains its global minimum, then the induced return distribution is Bellman-consistent under $\pi$:
$$Z(\cdot \mid s, a) = (\mathcal{T}^\pi Z)(\cdot \mid s, a).$$
\end{proposition}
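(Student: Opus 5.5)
The plan is to reduce the statement to the standard marginal-preservation property of conditional flow matching. First, fix $(s,a)$ and rewrite the inner expectation of Eq.~\eqref{15}. The pair $(z_0,y)$ is drawn from the product coupling $\mathcal{N}(0,1)\otimes p_{\text{td}}(\cdot\mid s,a)$, and $z_t=(1-t)z_0+ty$. Let $\rho_t(\cdot\mid s,a)$ be the law of $z_t$. A bias--variance decomposition of the square loss gives
\begin{equation*}
\mathbb{E}\big[\|v_z-u_z\|_2^2\big]=\mathbb{E}_{t,z_t}\big[\|v_z(s,a,z_t,t)-u^*(s,a,z_t,t)\|_2^2\big]+C,
\end{equation*}
where $u^*(s,a,z,t)=\mathbb{E}[y-z_0\mid z_t=z]$ is the marginal velocity field and $C$ does not depend on $v_z$. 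By Assumption~\ref{ass:4.1}, the global minimum is attained. Any minimizer therefore satisfies $v_z=u^*$ for $\rho_t$-almost every $z$ and almost every $t$, for $\mathcal{D}$-almost every $(s,a)$, since the outer expectation is a nonnegative integral of pointwise-minimized terms.

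Second, I will show that $u^*$ generates the path $\rho_t$. For each fixed pair $(z_0,y)$, the conditional path $z_t$ has velocity $y-z_0$. Taking expectations against test functions then shows that $\rho_t$ solves the continuity equation $\partial_t\rho_t+\partial_z(\rho_t u^*)=0$ in the weak sense, with $\rho_0=\mathcal{N}(0,1)$ and $\rho_1=p_{\text{td}}(\cdot\mid s,a)$.

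Third, I will use Assumption~\ref{ass:4.2}. Because the optimal field $v_z^*$ is bounded, continuous, and Lipschitz in $z$ uniformly in $t$, the ODE $dz_t/dt=v_z^*(s,a,z_t,t)$ has a unique global flow map $\phi_t$ by Picard--Lindelöf. The pushforward $(\phi_t)_\#\mathcal{N}(0,1)$ is then the unique weak solution of the continuity equation driven by $v_z^*$, by uniqueness for Lipschitz transport (the DiPerna--Lions/Ambrosio theory, or simply the method of characteristics in this Lipschitz setting). This solution coincides with $\rho_t$, so the terminal law is $z_1\sim(\phi_1)_\#\mathcal{N}(0,1)=\rho_1=p_{\text{td}}(\cdot\mid s,a)$. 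By construction, the distribution of $z_1$ is the induced return distribution $Z(\cdot\mid s,a)$. Combining this with Eq.~\eqref{13} gives $Z(\cdot\mid s,a)=(\mathcal{T}^\pi Z)(\cdot\mid s,a)$, where the $Z$ on the right is the one used to build the target, i.e.\ the fixed point of the iteration at optimality.

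The main obstacle is the third step. Equality $v_z=u^*$ holds only $\rho_t$-almost everywhere, and the uniqueness argument needs the field that drives $\rho_t$ to be regular enough. I would first use Assumption~\ref{ass:4.2} to identify $u^*$ with the Lipschitz representative $v_z^*$ on the support of $\rho_t$; this support is all of $\mathbb{R}$ for $t<1$ because of the Gaussian smoothing. I would then invoke uniqueness of solutions of the continuity equation with a Lipschitz field. A secondary subtlety is that $p_{\text{td}}$ depends on $Z$ itself. I will read the statement as saying that when the minimizer is plugged back in, so the target is formed from the same $Z$ at optimality, the fixed-point identity holds. The result is then stated $\mathcal{D}$-almost everywhere in $(s,a)$, extended to all $(s,a)$ under the implicit assumption that $\mathcal{D}$ has full support.
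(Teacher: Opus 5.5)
Your proposal is correct and follows essentially the same route as the paper's proof. The paper likewise characterizes the CFM minimizer as a conditional expectation (Lemma~\ref{lem:1}), shows that the interpolation marginals and the ODE pushforward satisfy the same weak continuity equation (Lemmas~\ref{lem:2}--\ref{lem:3}), and invokes uniqueness of weak solutions under Assumption~\ref{ass:4.2} (Lemma~\ref{lem:4}) to get $\nu_1=\mu_1=p_{\text{td}}$. The differences are minor. The paper proves uniqueness directly by a duality argument with the backward transport equation, where you cite DiPerna--Lions/Ambrosio or characteristics. You are also more careful than the paper about the almost-everywhere identification of the minimizer with the Lipschitz field, the $\mathcal{D}$-a.e.\ quantifier, and the fixed-point reading of the self-referential target.
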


\begin{IEEEproof}
See Appendix~\ref{proof-4.1}.
\end{IEEEproof}

For policy improvement, we use the mean value signal derived from the distributional critic. Specifically, let $\{z^{(k)}\}_{k=1}^K$ be samples drawn from $Z(s,a)$. Let $\mu(s,a)$ denote the empirical mean computed from these samples. The value used for policy improvement is defined as
\begin{equation}
	Q(s,a)\triangleq \mu(s,a)=\frac{1}{K}\sum_{k=1}^K z^{(k)}.
\end{equation}
Furthermore, we define the advantage of replay action $a$ over the current policy-flow action $a_{\pi}(s)$ and assign each $(s,a)$ a bounded exponential weight:
\begin{equation}
	\begin{aligned}
	&\Delta(s,a) \doteq \max\!\left(Q(s,a)-Q(s,a_{\pi}(s)),0\right),
	\quad \\
	&w(s,a)= \exp\!\left(\Delta(s,a)-\bar{\Delta}\right),
	\end{aligned}
\end{equation}
where $\bar{\Delta}$ is the mini-batch mean of $\Delta$. We construct the interpolation path between sample $a_0$ and the reference action $a$:
\begin{equation}
	a_t = (1 - t)a_0 + t a,  t \sim \mathcal{U}[0,1], u_{\pi}=a-a_0,
\end{equation}
where $a_0 \sim \mathcal{N}(0, I)$.
The policy training objective can be expressed as:
\begin{equation}
	\label{19}
	\begin{aligned}
		\min\; \mathbb{E}_{(s,a) \sim \mathcal{D}} \Big[
		-Q(s, a_{\pi}(s)) + w(s, a)\,\| v_{\pi}(s, a_t, t) - u_{\pi} \|_2^2
		\Big].
	\end{aligned}
\end{equation}
Objective~\eqref{19} comprises two parts. The former drives the policy toward higher-value actions, while the latter guides the update by aligning the policy distribution with high-quality behavior actions in the replay buffer.

\subsection{Entropy-Covariance Exploration Regulator}
\label{4.3}
We adapt exploration intensity using two diagnostics: policy entropy and action-uncertainty covariance. Policy entropy reflects how dispersed the action distribution is, while the covariance diagnostic quantifies the covariance between action-density concentration and return uncertainty.

To estimate policy entropy, we fit a $K_m$-component diagonal GMM $q(a|s)$ to $N$ actions sampled from buffer states $\{s_b\}_{b=1}^B$ and approximate the state-conditioned entropy $H(s)$ following \cite{huber2008entropy}. Detailed derivations are provided in Appendix~\ref{Derivation-1}. Next, we introduce an action-uncertainty covariance.

\begin{definition}[Action-Uncertainty Covariance]
\label{def:4.6}
For a state $s$, $\{a_i\}_{i=1}^N \sim \pi(\cdot \mid s)$. Let $\log q(a \mid s)$ denote the log-density evaluated under the fitted $K_m$-component diagonal GMM $q(a \mid s)$. We define the standardized action-uncertainty covariance as
\begin{align}
	 &\rho(s)\triangleq \mathrm{corr}\!\left(\log q(a\mid s),\, \sigma(s,a)\right), \quad \\
	 &\sigma(s,a) = \sqrt{
		\frac{1}{K}
		\sum_{k=1}^{K}
		\left(z^{(k)}-\mu(s,a)\right)^2,
	}
\end{align}
where ${\sigma}(s,a)$ is the uncertainty of the return distribution. Here $\operatorname{corr}(\cdot,\cdot)$ denotes the Pearson correlation computed over $\{a_i\}_{i=1}^N$.
$\rho(s)$ characterizes the concentration of the exploration distribution on high-uncertainty action regions.
\end{definition}

To regulate the exploration strength, we construct two gates from the policy entropy and the action-uncertainty covariance diagnostic.
First, the entropy gate and covariance gate are defined jointly as
\begin{equation}
	\begin{aligned}
		g_H = \exp(\max(0, H_{\text{tgt}} - \hat{H})), \quad g_D = \exp(\rho -  \rho_{\text{thr}}),
	\end{aligned}
\end{equation}
where $H_{\text{tgt}} = -c \cdot d$ and $d$ denotes the action dimension. These components are integrated into a combined gate and effective regularization mechanism, where the global gate $g = g_H g_D$ determines the effective regularization coefficient:
\begin{equation}
	\label{eff}
	\begin{aligned}
	\lambda_{\text{eff}} = \frac{\lambda_0}{g^2 + \epsilon},
	\end{aligned}
\end{equation}
where $\lambda_0$ is the base regularization coefficient. $g_H$, $g_D$, and $\lambda_{\text{eff}}$ are updated periodically during training.
The final ECER objective is formulated to maximize the gain-weighted log-likelihood while regularizing the exploration scale:
\begin{equation}
	\label{ecer}
	\begin{split}
		&\mathcal{L}_{\text{ECER}} = -\mathbb{E}_{s \sim \mathcal{D}, a_e \sim p(\cdot|s)} \left[ A_e(s) \log p(a_e | s) \right] \\
		& + \lambda_{\text{eff}}\mathbb{E}_{s \sim \mathcal{D}} \left[ \|\sigma(s)\|_2^2 \right].
	\end{split}
\end{equation}
Gain $A_e(s)$ is defined as the advantage of the interaction action $a_e$ relative to the reference policy $a_{\pi}(s)$:
\begin{equation}
	\begin{aligned}
		A_e(s) \triangleq Q(s, a_e) - Q(s, a_{\pi}(s)).
	\end{aligned}
\end{equation}
$\log p(a_e|s)$ is the log-likelihood of a diagonal Gaussian distribution $\mathcal{N}(a_{\pi}(s), \sigma(s)^2 I)$. Together, the entropy gate $g_H$ and covariance gate $g_D$ enable closed-loop regulation of the exploration scale.

\subsection{Practical Dual-Flow RL Algorithm}
To obtain a practical algorithm, we adopt parameterized function approximation for the distributional critic, the flow-based policy, and the exploration regulator. Specifically, we model the return distribution with a conditional flow $Z_\theta(\cdot \mid s, a)$ and parameterize the return and policy velocity fields as $v_{z, \theta}(s, a, z, t)$ and $v_{\pi, \phi}(s, a, t)$, respectively. We further learn a state-aware exploration scale $\sigma_{\psi}(s)$ for ECER.

For approximate policy evaluation, we minimize the distributional Bellman residual by fitting $Z_\theta(\cdot \mid s, a)$ to the one-step TD target distribution $p_{\text{td}}(\cdot \mid s, a) \triangleq (T^\pi Z_\theta)(\cdot \mid s, a)$ using the flow-matching objective in Sec.~\ref{sec:policy_iteration}:
\begin{equation}
	\begin{aligned}
		\mathcal{L}_Q(\theta) = \mathbb{E}_{(s, a, r, s') \sim \mathcal{D}} \mathbb{E}_{z_0, t, y} \left[ \left\| v_{z, \theta}(s, a, z_t, t) - u_z \right\|_2^2 \right],
	\end{aligned}
\end{equation}
where $y \sim p_{\text{td}}(\cdot \mid s, a)$. For policy improvement, we update the policy flow by minimizing the objective in Sec.~\ref{sec:policy_iteration},
\begin{equation}
	\begin{aligned}
		&\mathcal{L}_\pi(\phi) = \mathbb{E}_{(s, a) \sim \mathcal{D}} \mathbb{E}_{a_0, t} \Big[ - Q(s, a_\pi(s)) \\
		 &+ w(s, a) \left\| v_{\pi, \phi}(s, a_t, t) - u_{\pi} \right\|_2^2 \Big].
	\end{aligned}
\end{equation}
Finally, ECER is trained to increase the expectation of the return distribution while adaptively regulating the exploration intensity via the entropy-covariance gate $g$ and the effective coefficient $\lambda_{\text{eff}}$:
\begin{equation}
	\begin{aligned}
		&\mathcal{L}_{\text{ECER}}(\psi) =  - \mathbb{E}_{s \sim \mathcal{D}, a_{e} \sim p_\psi(\cdot \mid s)} \left[ A_{e}(s) \log p_\psi(a_{e} \mid s) \right] \\
		& + \lambda_{\text{eff}}(t) \mathbb{E}_{s \sim \mathcal{D}} \left[ \left\| \sigma_\psi(s) \right\|_2^2 \right].
	\end{aligned}
\end{equation}
The full training loop is given in Alg.~\ref{Dual-Flow}.

\begin{figure*}[t]
	\centering

	% ================= Left block: legend + (a)(b) =================
	\begin{minipage}[t]{0.49\textwidth}
		\centering
		\includegraphics[width=0.92\linewidth]{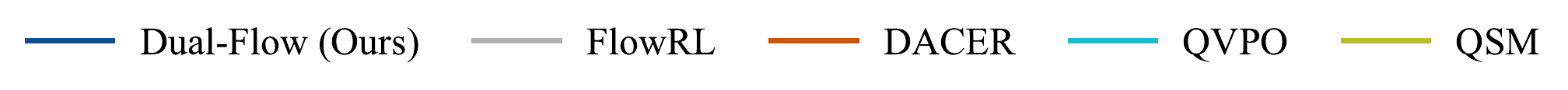}

		\vspace{0.6em}

		\begin{subfigure}[t]{0.49\linewidth}
			\centering
			\includegraphics[width=\linewidth]{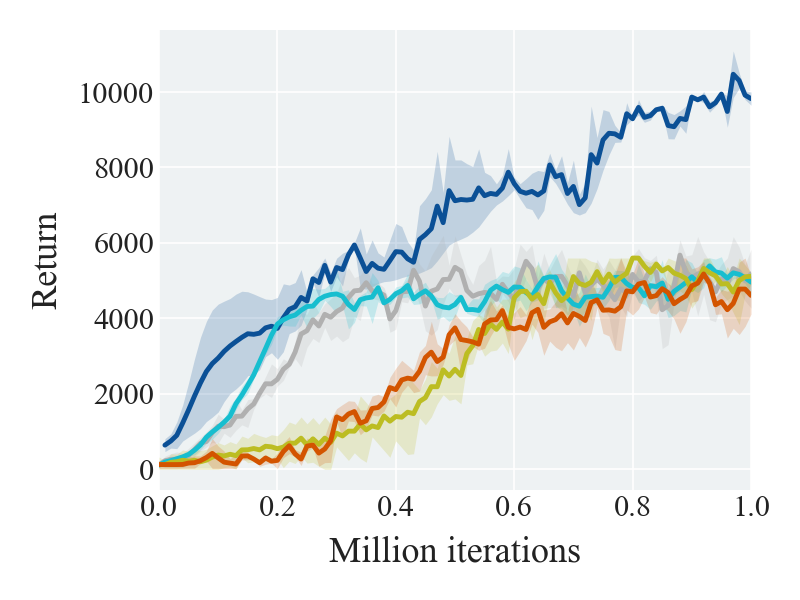}
			\caption{Humanoid-v3}
			\label{fig:Humanoidv3}
		\end{subfigure}\hfill
		\begin{subfigure}[t]{0.49\linewidth}
			\centering
			\includegraphics[width=\linewidth]{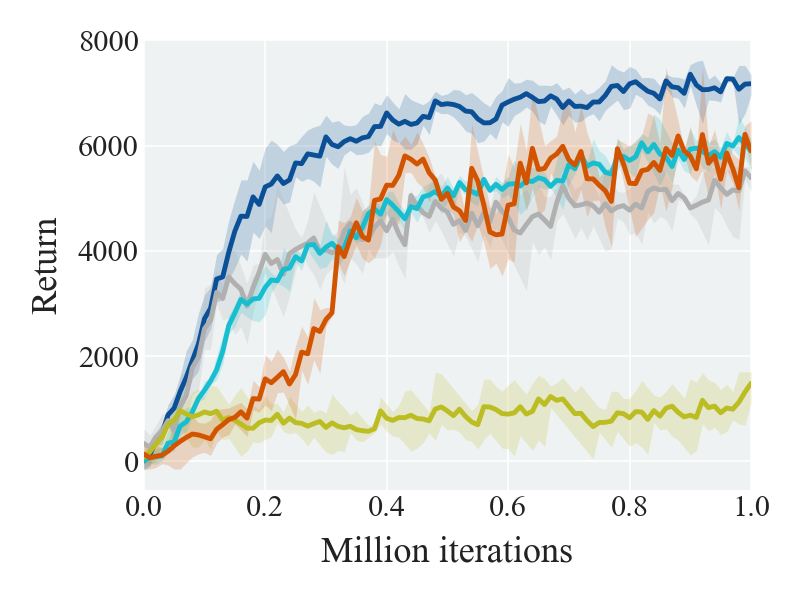}
			\caption{Ant-v3}
			\label{fig:Antv3}
		\end{subfigure}
	\end{minipage}\hfill
	% ================= Right block: legend + (c)(d) =================
	\begin{minipage}[t]{0.49\textwidth}
		\centering
		\includegraphics[width=0.92\linewidth]{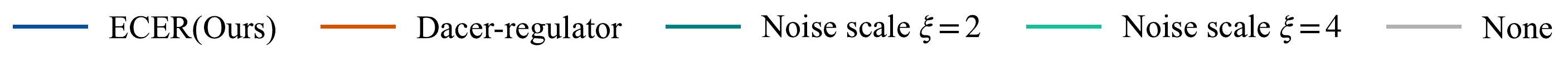}

		\vspace{0.6em}

		\begin{subfigure}[t]{0.49\linewidth}
			\centering
			\includegraphics[width=\linewidth]{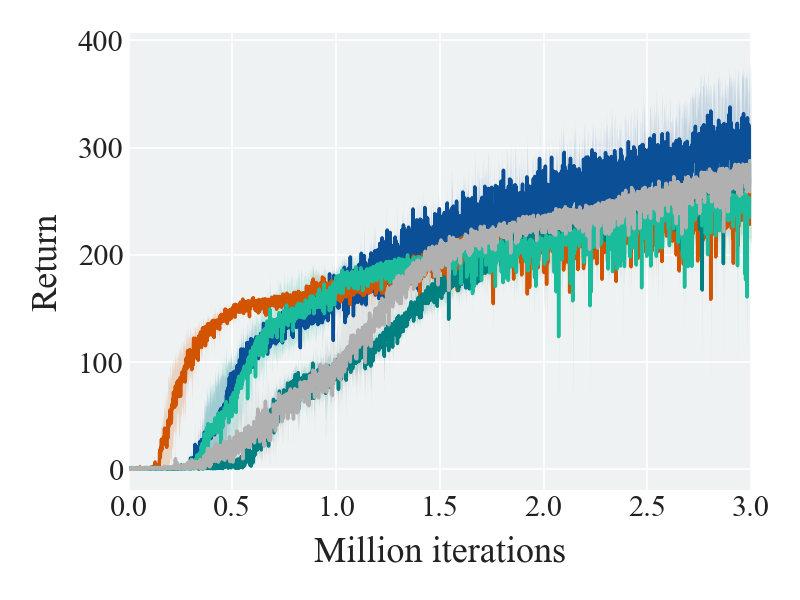}
			\caption{Training curves}
			\label{fig:ablation-exploration-train}
		\end{subfigure}\hfill
		\begin{subfigure}[t]{0.49\linewidth}
			\centering
			\includegraphics[width=\linewidth]{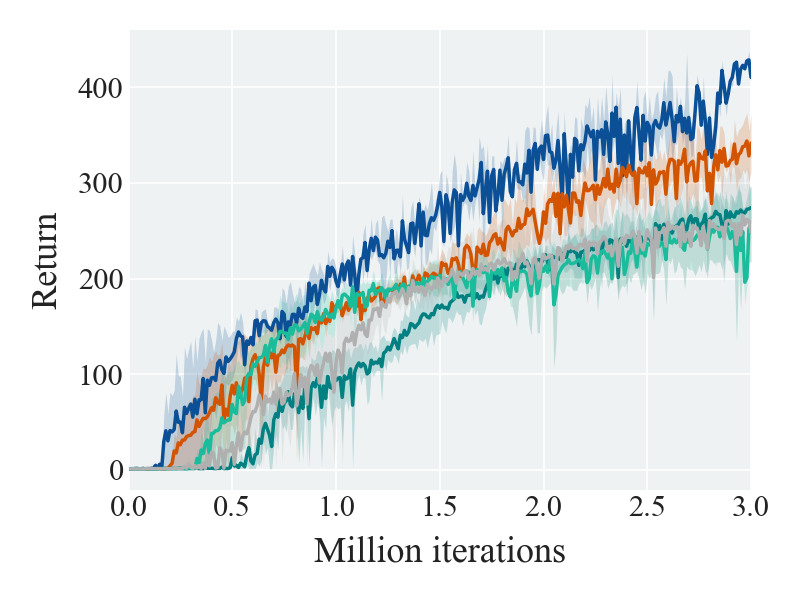}
			\caption{Evaluation curves}
			\label{fig:ablation-exploration-test}
		\end{subfigure}
	\end{minipage}

	\caption{
		\textbf{Ablations.}
		\textbf{(a--b) Evaluation curves on MuJoCo:}
		We compare Dual-Flow with various diffusion and flow baselines on \textit{Humanoid-v3} and \textit{Ant-v3}.
		Dual-Flow outperforms most baselines and achieves stronger overall performance.
		\textbf{(c--d) Exploration-strength regulation:} Dual-Flow on \textit{Humanoid-run} with different exploration regulators, including \textbf{ECER} (ours), \textbf{DACER-style entropy regulator}, and \textbf{fixed initial noise scale tuning} ($\xi$). \textbf{None} denotes Dual-Flow without any exploration regulator.
	}
	\label{fig:ablation_combined_nav_explore}
\end{figure*}

\begin{figure*}[t]
	\centering
	\includegraphics[width=\textwidth]{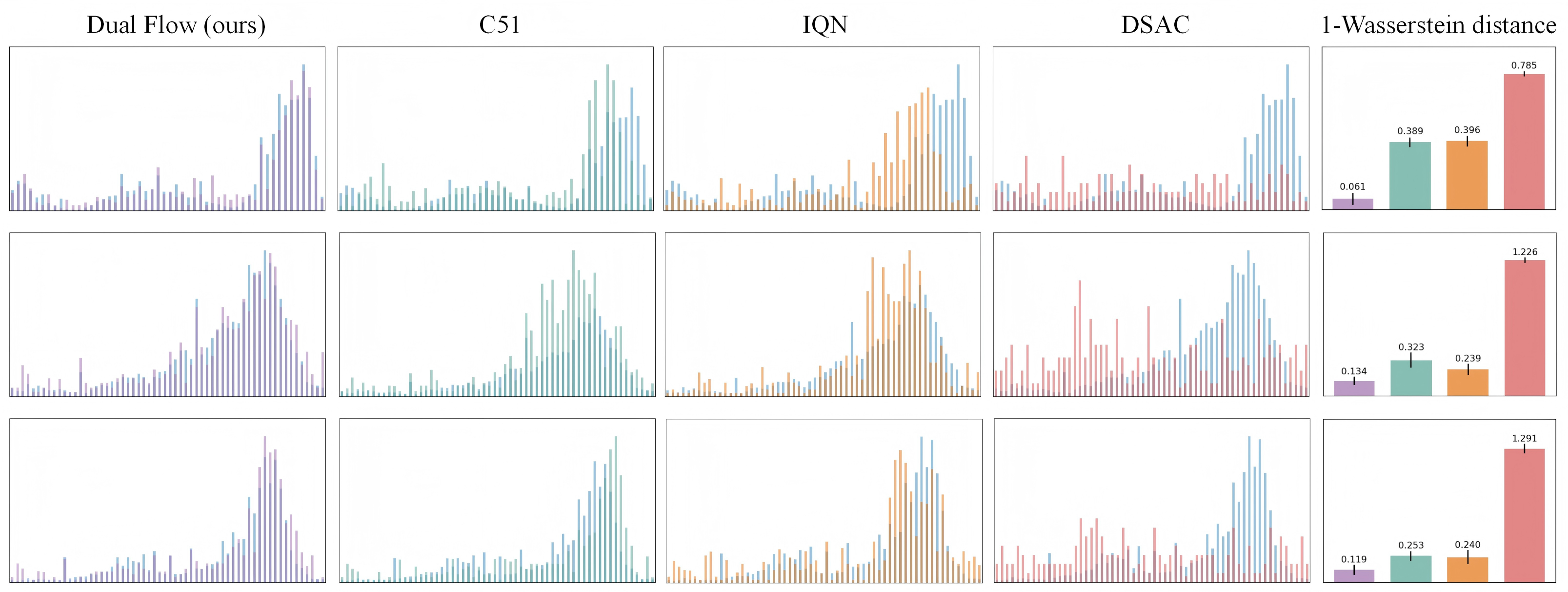}
	\caption{
		\textbf{Visualizing the return distribution.}
		Columns from left to right show the predicted return distributions of Dual-Flow, C51, IQN, and DSAC, followed by the corresponding 1-Wasserstein distance.
		Dual-Flow better matches the ground-truth distributions and achieves the lowest error.
	}
	\label{fig:dogrun_return_dist}
\end{figure*}

\begin{figure*}[h]
	\centering
	
	\includegraphics[width=0.95\linewidth]{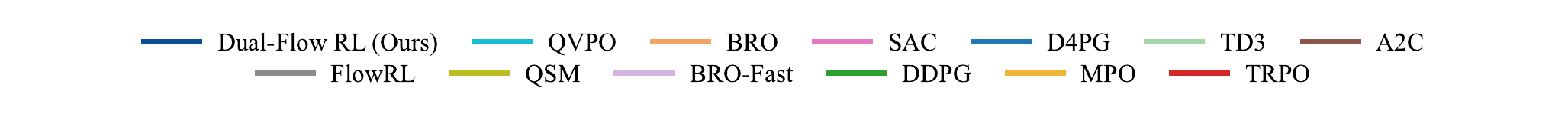}
	\vspace{-0.18cm}
	
	% Row 1 (Humanoid-run, Humanoid-stand, Dog-run, Dog-trot)
	\begin{minipage}[b]{0.24\linewidth}
		\centering
		\includegraphics[width=\linewidth]{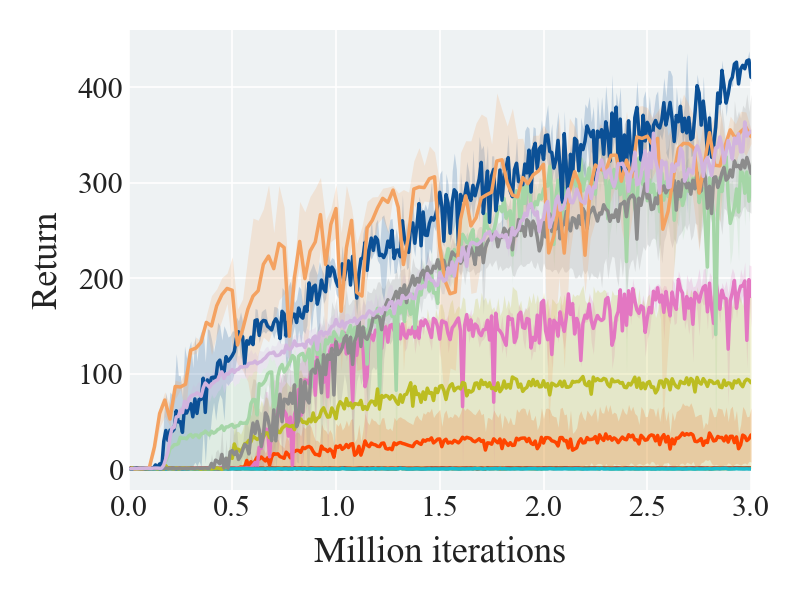}
		\subcaption{Humanoid-run}
		\label{fig:app_humanoid_run}
	\end{minipage}\hfill
	\begin{minipage}[b]{0.24\linewidth}
		\centering
		\includegraphics[width=\linewidth]{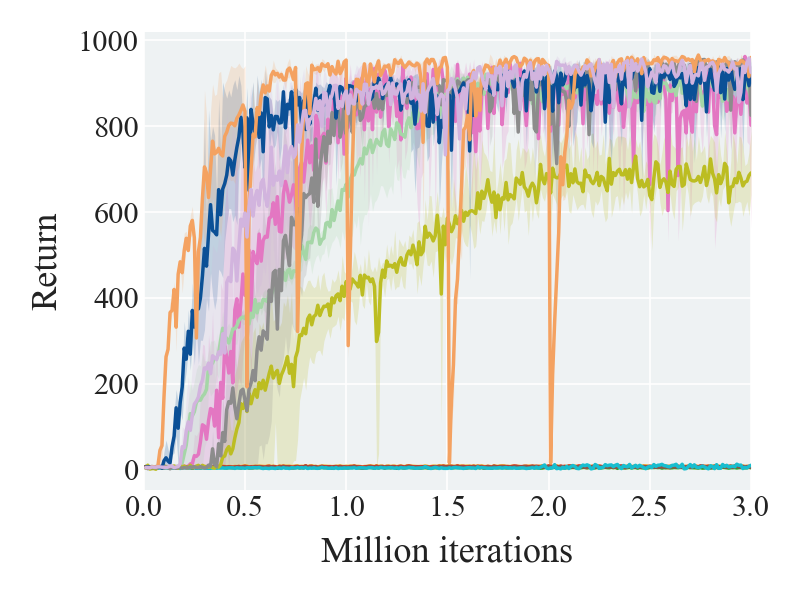}
		\subcaption{Humanoid-stand}
		\label{fig:app_humanoid_stand}
	\end{minipage}\hfill
	\begin{minipage}[b]{0.24\linewidth}
		\centering
		\includegraphics[width=\linewidth]{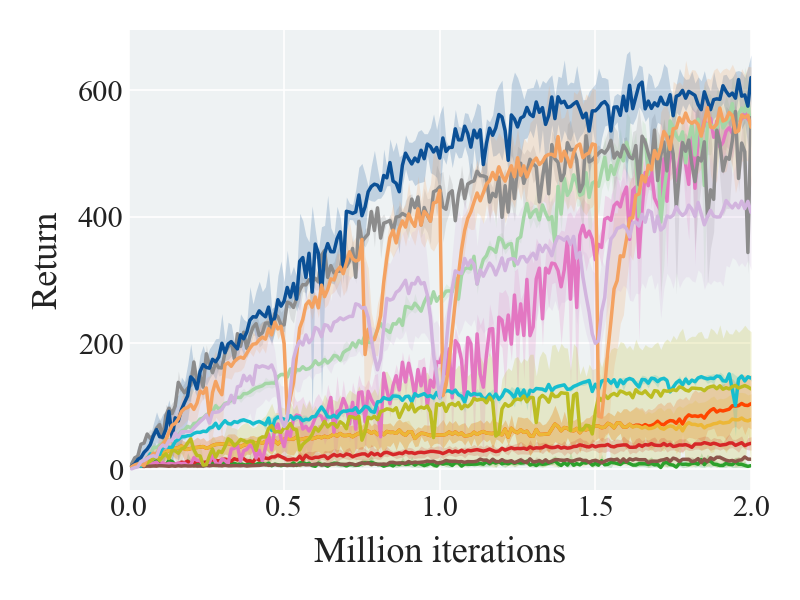}
		\subcaption{Dog-run}
		\label{fig:app_dog_run}
	\end{minipage}\hfill
	\begin{minipage}[b]{0.24\linewidth}
		\centering
		\includegraphics[width=\linewidth]{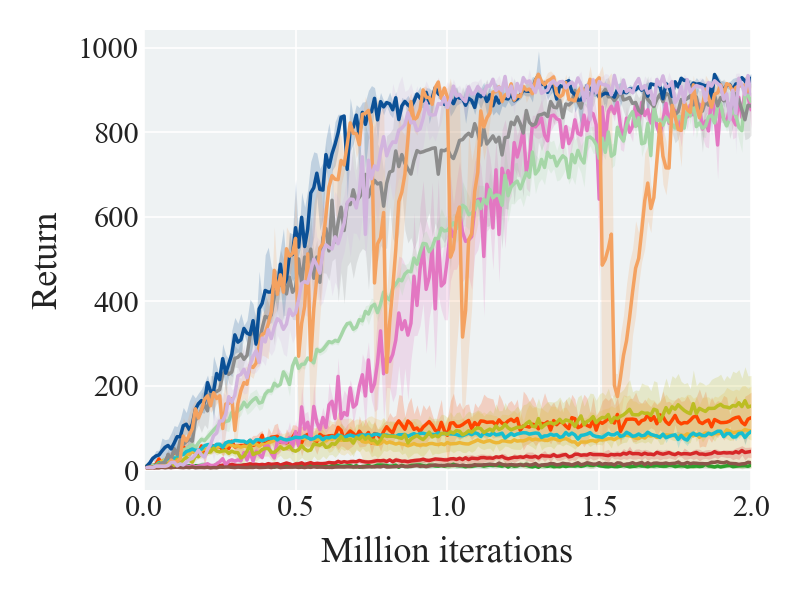}
		\subcaption{Dog-trot}
		\label{fig:app_dog_trot}
	\end{minipage}
	
	\vspace{0.85em}
	
	% Row 2 (Dog-stand, Dog-walk, Walker-stand, Walker-walk)
	\begin{minipage}[b]{0.24\linewidth}
		\centering
		\includegraphics[width=\linewidth]{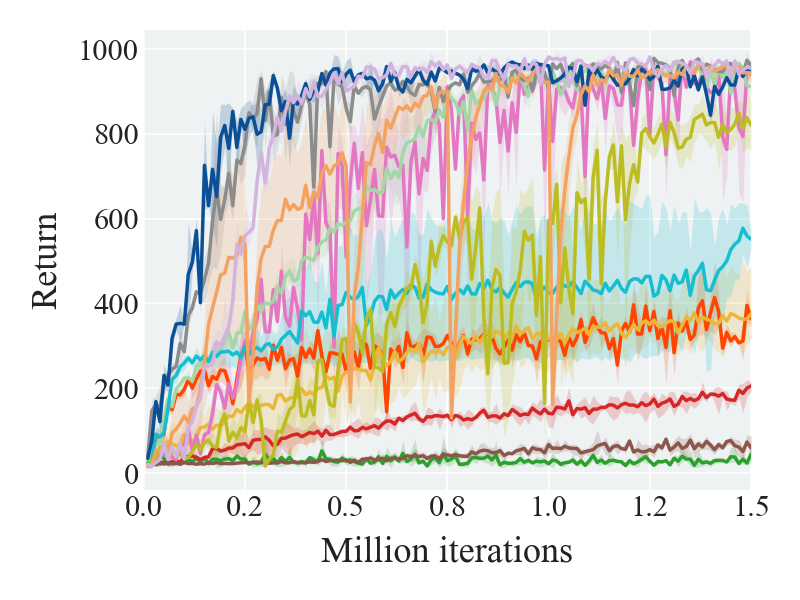}
		\subcaption{Dog-stand}
		\label{fig:app_dog_stand}
	\end{minipage}\hfill
	\begin{minipage}[b]{0.24\linewidth}
		\centering
		\includegraphics[width=\linewidth]{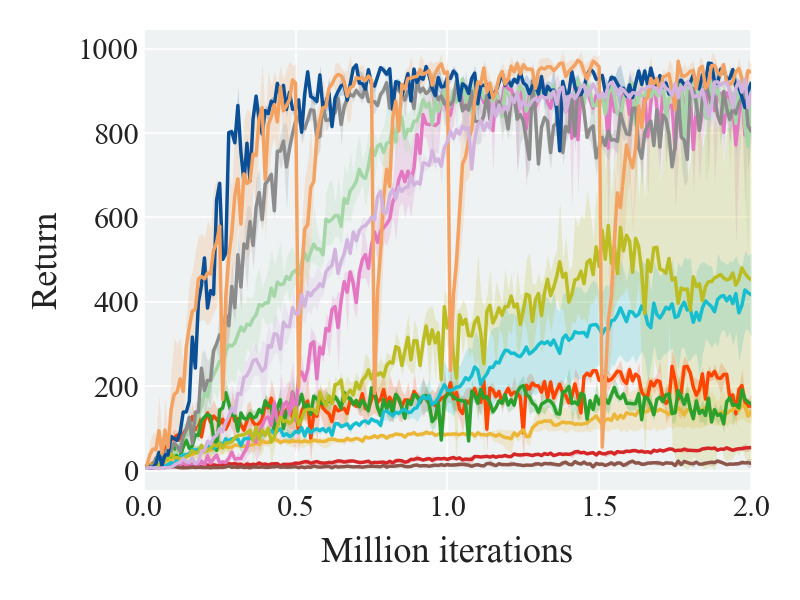}
		\subcaption{Dog-walk}
		\label{fig:app_dog_walk}
	\end{minipage}\hfill
	\begin{minipage}[b]{0.24\linewidth}
		\centering
		\includegraphics[width=\linewidth]{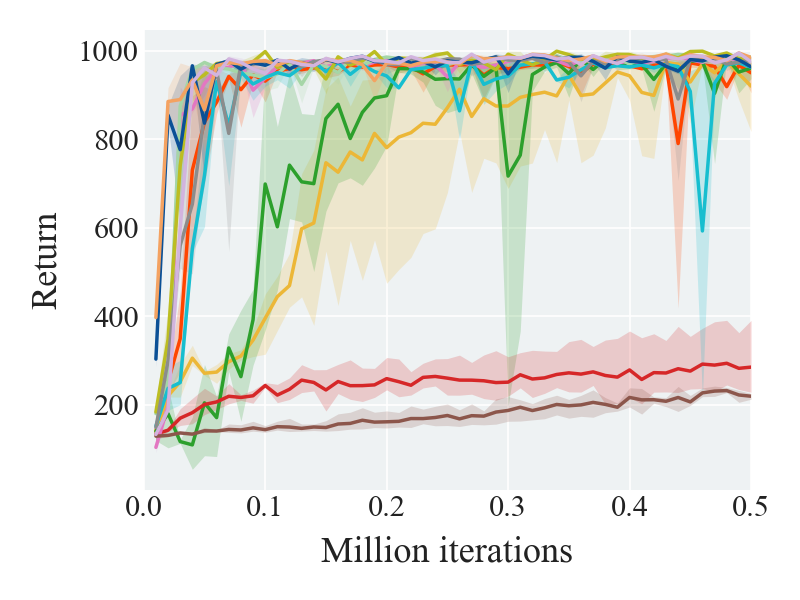}
		\subcaption{Walker-stand}
		\label{fig:app_walker_stand}
	\end{minipage}\hfill
	\begin{minipage}[b]{0.24\linewidth}
		\centering
		\includegraphics[width=\linewidth]{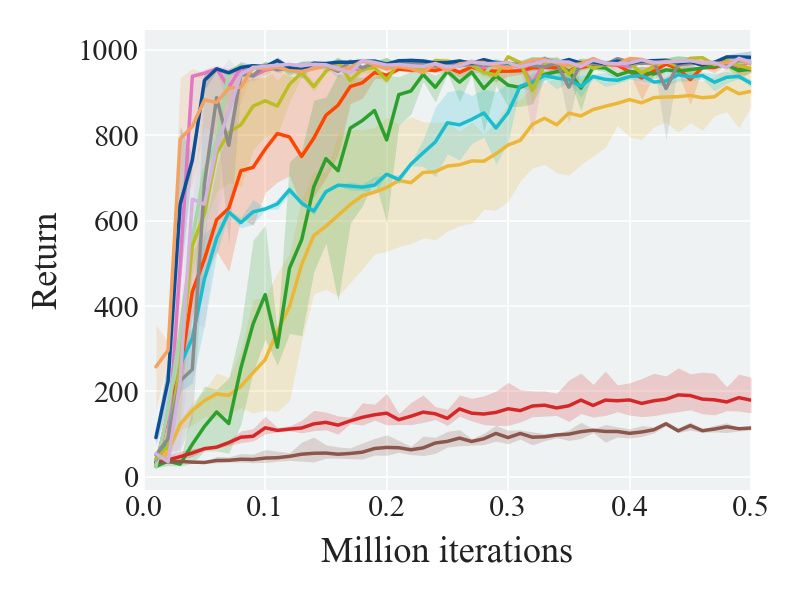}
		\subcaption{Walker-walk}
		\label{fig:app_walker_walk}
	\end{minipage}
	
	\vspace{0.85em}
	
	% Row 3 (Walker-run, Quadruped-walk, H1balance_simple, H1sit_hard)
	\begin{minipage}[b]{0.24\linewidth}
		\centering
		\includegraphics[width=\linewidth]{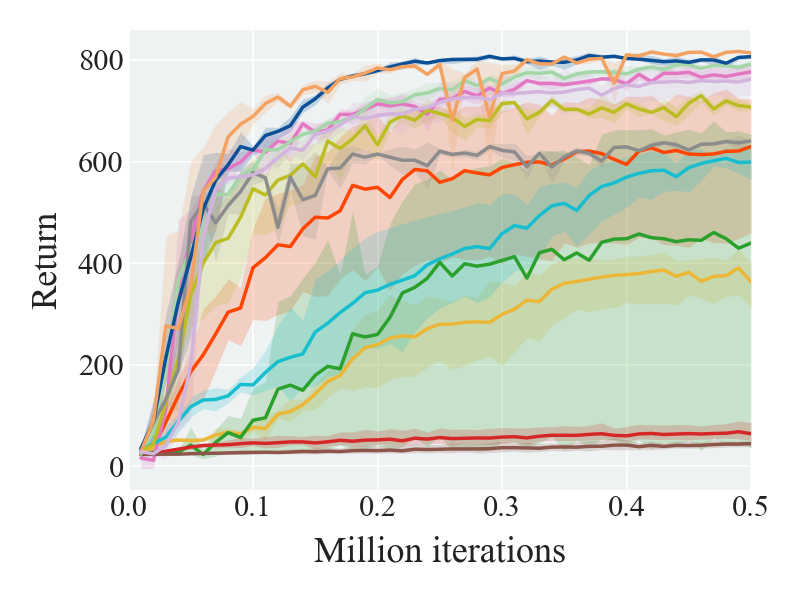}
		\subcaption{Walker-run}
		\label{fig:app_walker_run}
	\end{minipage}\hfill
	\begin{minipage}[b]{0.24\linewidth}
		\centering
		\includegraphics[width=\linewidth]{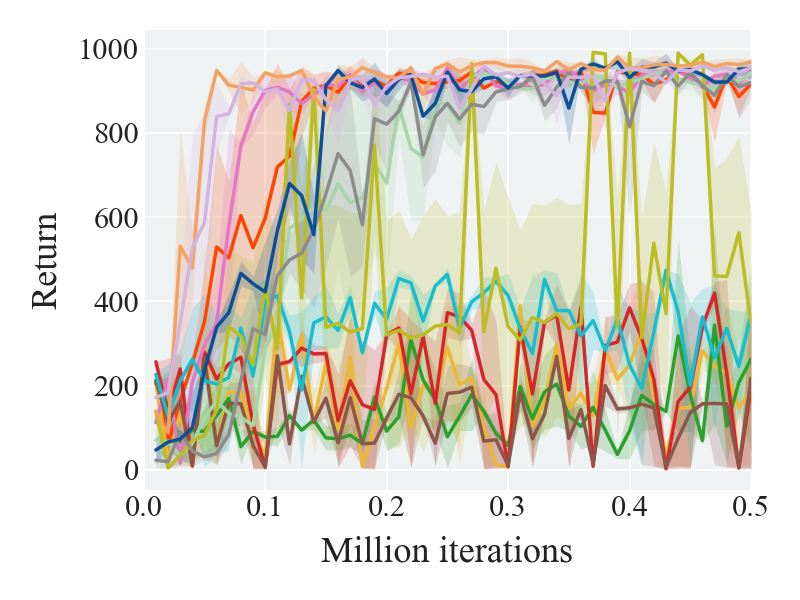}
		\subcaption{Quadruped-walk}
		\label{fig:app_quad_walk}
	\end{minipage}\hfill
	\begin{minipage}[b]{0.24\linewidth}
		\centering
		\includegraphics[width=\linewidth]{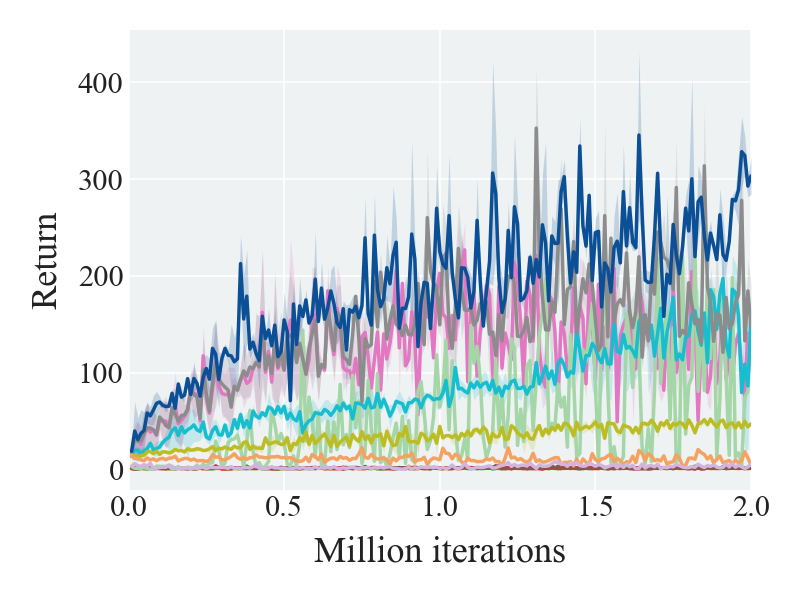}
		\subcaption{H1balance\_simple}
		\label{fig:app_h1_balance}
	\end{minipage}\hfill
	\begin{minipage}[b]{0.24\linewidth}
		\centering
		\includegraphics[width=\linewidth]{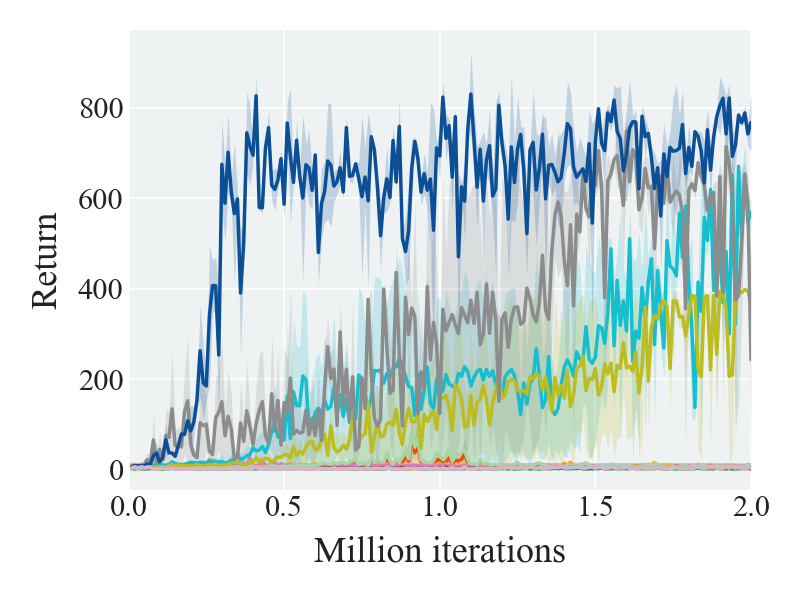}
		\subcaption{H1sit\_hard}
		\label{fig:app_h1_sit}
	\end{minipage}
	
	\vspace{0.85em}
	
	% Row 4 (H1balance_hard, H1crawl, H1maze, H1reach)
	\begin{minipage}[b]{0.24\linewidth}
		\centering
		\includegraphics[width=\linewidth]{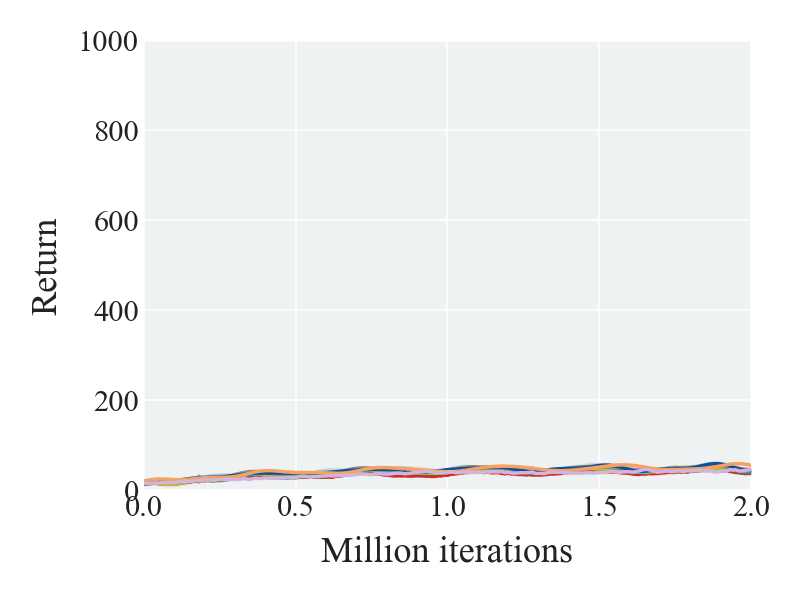}
		\subcaption{H1balance\_hard}
		\label{fig:app_h1_balance_hard}
	\end{minipage}\hfill
	\begin{minipage}[b]{0.24\linewidth}
		\centering
		\includegraphics[width=\linewidth]{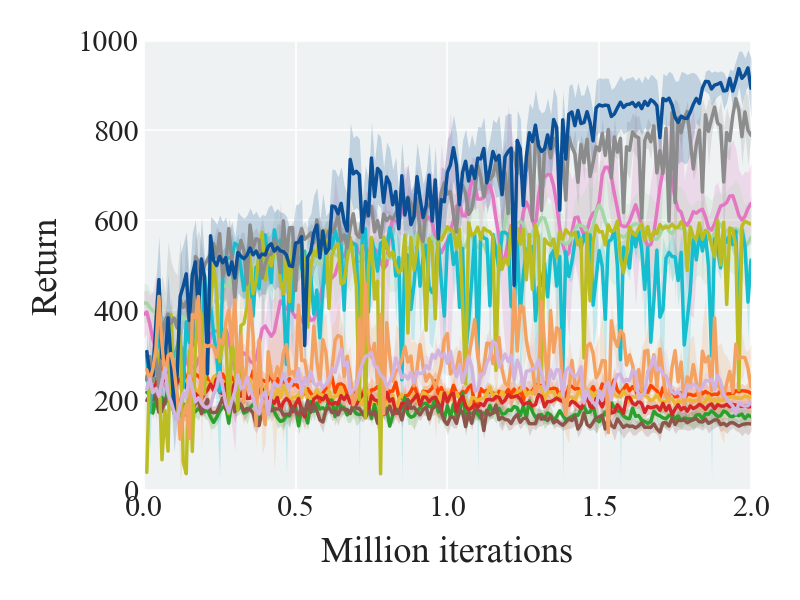}
		\subcaption{H1crawl}
		\label{fig:app_h1_crawl}
	\end{minipage}\hfill
	\begin{minipage}[b]{0.24\linewidth}
		\centering
		\includegraphics[width=\linewidth]{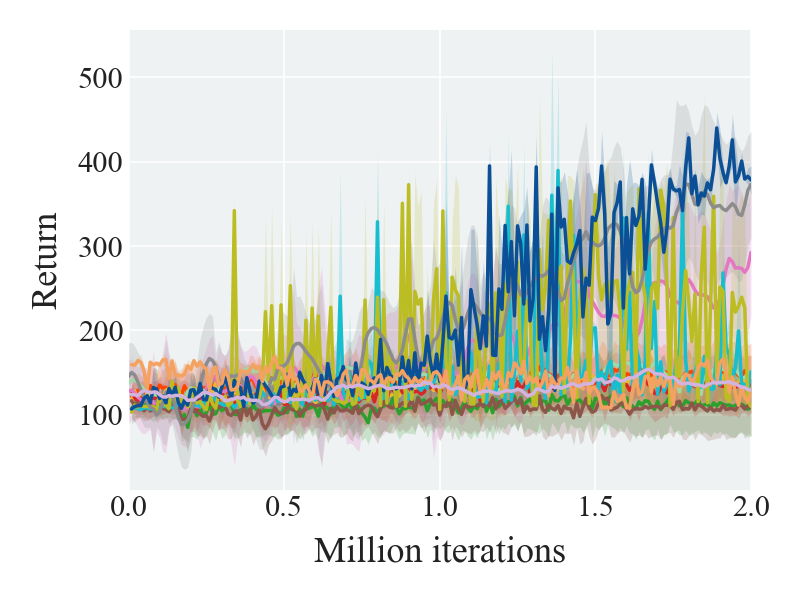}
		\subcaption{H1maze}
		\label{fig:app_h1_maze}
	\end{minipage}\hfill
	\begin{minipage}[b]{0.24\linewidth}
		\centering
		\includegraphics[width=\linewidth]{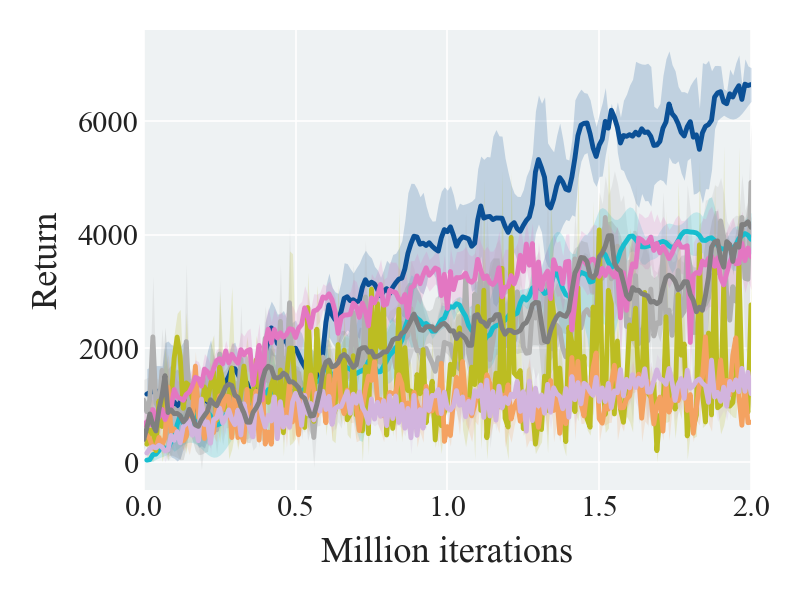}
		\subcaption{H1reach}
		\label{fig:app_h1_reach}
	\end{minipage}
	
	\caption{\textbf{Evaluation curves on benchmarks.} Solid lines denote the mean and shaded regions indicate the 95\% confidence interval.}
	\label{fig:all_tasks}
\end{figure*}

\begin{table*}[t]
	\centering
	\caption{Total Average Return. Performance is reported as the average of the maximum returns observed during the last 10\% of training iterations. \text{Mean} $\pm$ \text{Std} over 5 seeds. The best results are highlighted in \textbf{Bold}.}
	\label{tab:avg_final_return}
	\setlength{\tabcolsep}{1pt}
	\footnotesize
	\resizebox{\textwidth}{!}{
		\begin{tabular}{lccccccccccccc}
			\toprule
			\textbf{Task} & \textbf{Dual-Flow (ours)} & \textbf{FlowRL} & \textbf{QVPO} & \textbf{QSM} & \textbf{BRO} & \textbf{BRO-Fast} & \textbf{SAC} & \textbf{TD3} & \textbf{DDPG} & \textbf{D4PG} & \textbf{MPO} & \textbf{TRPO} & \textbf{A2C} \\
			\midrule
			Humanoid-run   & \textbf{433 $\pm$ 7} & 329 $\pm$ 46 & 1 $\pm$ 1 & 98 $\pm$ 132 & 376 $\pm$ 2 & 364 $\pm$ 1 & 204 $\pm$ 18 & 329 $\pm$ 1 & 2 $\pm$ 1 & 40 $\pm$ 42 & 2 $\pm$ 1 & 2 $\pm$ 1 & 1 $\pm$ 1 \\
			Humanoid-stand & 951 $\pm$ 21 & 960 $\pm$ 3 & 12 $\pm$ 1 & 746 $\pm$ 42 & \textbf{966 $\pm$ 2} & 962 $\pm$ 2 & 964 $\pm$ 2 & 929 $\pm$ 12 & 5 $\pm$ 1 & 7 $\pm$ 1 & 9 $\pm$ 1 & 9 $\pm$ 1 & 8 $\pm$ 1 \\
			Dog-run        & \textbf{628 $\pm$ 33} & 567 $\pm$ 30 & 153 $\pm$ 14 & 138 $\pm$ 114 & 589 $\pm$ 68 & 448 $\pm$ 110 & 585 $\pm$ 21 & 598 $\pm$ 56 & 19 $\pm$ 11 & 106 $\pm$ 41 & 83 $\pm$ 38 & 45 $\pm$ 10 & 22 $\pm$ 1 \\
			Dog-trot       & \textbf{938 $\pm$ 7} & 916 $\pm$ 14 & 96 $\pm$ 1 & 171 $\pm$ 123 & 935 $\pm$ 5 & 935 $\pm$ 2 & 916 $\pm$ 8 & 899 $\pm$ 11 & 16 $\pm$ 7 & 146 $\pm$ 44 & 98 $\pm$ 97 & 49 $\pm$ 12 & 23 $\pm$ 1 \\
			Dog-stand      & 961 $\pm$ 9 & 977 $\pm$ 12 & 590 $\pm$ 86 & 864 $\pm$ 64 & 967 $\pm$ 18 & \textbf{987 $\pm$ 13} & 965 $\pm$ 24 & 973 $\pm$ 4 & 54 $\pm$ 27 & 445 $\pm$ 111 & 390 $\pm$ 95 & 209 $\pm$ 15 & 86 $\pm$ 5 \\
			Dog-walk       & 966 $\pm$ 8 & 941 $\pm$ 11 & 429 $\pm$ 132 & 491 $\pm$ 412 & \textbf{976 $\pm$ 23} & 936 $\pm$ 5 & 944 $\pm$ 10 & 941 $\pm$ 17 & 195 $\pm$ 12 & 245 $\pm$ 18 & 163 $\pm$ 44 & 60 $\pm$ 4 & 27 $\pm$ 4 \\
			Walker-stand   & 990 $\pm$ 4 & 985 $\pm$ 4 & 974 $\pm$ 1 & \textbf{997 $\pm$ 1} & 995 $\pm$ 3 & 995 $\pm$ 4 & 991 $\pm$ 2 & 990 $\pm$ 2 & 985 $\pm$ 4 & 983 $\pm$ 2 & 977 $\pm$ 5 & 302 $\pm$ 77 & 239 $\pm$ 2 \\
			Walker-walk    & \textbf{988 $\pm$ 13} & 977 $\pm$ 1 & 942 $\pm$ 11 & 983 $\pm$ 5 & 978 $\pm$ 4 & 981 $\pm$ 6 & 973 $\pm$ 2 & 976 $\pm$ 1 & 967 $\pm$ 5 & 972 $\pm$ 4 & 917 $\pm$ 46 & 189 $\pm$ 49 & 121 $\pm$ 6 \\
			Walker-run     & 809 $\pm$ 1 & 645 $\pm$ 9 & 612 $\pm$ 24 & 730 $\pm$ 11 & \textbf{818 $\pm$ 0} & 763 $\pm$ 45 & 785 $\pm$ 6 & 794 $\pm$ 8 & 466 $\pm$ 355 & 636 $\pm$ 154 & 392 $\pm$ 48 & 69 $\pm$ 17 & 45 $\pm$ 7 \\
			Quadruped-walk      & 957 $\pm$ 27 & 950 $\pm$ 14 &434 $\pm$ 6& \textbf{986 $\pm$ 7}& 972 $\pm$ 9 & 963 $\pm$ 14 & 949 $\pm$ 20 & 943 $\pm$ 9& 385 $\pm$ 274 & 945 $\pm$ 2 &383 $\pm$ 21& 422 $\pm$  43 & 367 $\pm$ 88   \\
			H1balance\_simple & \textbf{354 $\pm$ 71} & 314 $\pm$ 89 & 221 $\pm$ 1 & 54 $\pm$ 11 & 20 $\pm$ 2 & 16 $\pm$ 2 & 231 $\pm$ 82 & 225$\pm$ 33  & 13 $\pm$ 1 & 12 $\pm$ 1 & 12 $\pm$ 1 & 10 $\pm$ 1 & 13 $\pm$ 1 \\
			H1sit\_hard  &  \textbf{837 $\pm$ 45 }& 728 $\pm$ 60 & 696 $\pm$ 31 & 399 $\pm$ 21 & 17 $\pm$ 3 & 18 $\pm$ 2 & 10 $\pm$ 4 & 14 $\pm$ 15 & 15 $\pm$ 2 & 17 $\pm$ 4 & 11 $\pm$ 4 & 6 $\pm$ 2 &  8 $\pm$ 2 \\
			H1balance\_hard & \textbf{73 $\pm$ 9} & 55 $\pm$ 3 & 57 $\pm$ 8 & 50 $\pm$ 2 & 59 $\pm$ 6 & 47 $\pm$ 0 & 54 $\pm$ 3 & 56 $\pm$ 8 & 46 $\pm$ 5  & 49 $\pm$ 3 &49 $\pm$ 4  & 39 $\pm$ 3 & 46 $\pm$ 2  \\
			H1crawl & \textbf{961 $\pm$ 8} & 884 $\pm$ 11 & 589 $\pm$ 6 & 600 $\pm$ 5 & 428 $\pm$ 4 & 263 $\pm$ 42 & 731 $\pm$ 175 & 607 $\pm$ 117 & 177 $\pm$ 25  & 226 $\pm$ 26 & 212 $\pm$ 25 & 198 $\pm$28  & 161 $\pm$ 25 \\
			H1maze & \textbf{458 $\pm$ 2} & 389 $\pm$ 112 & 304 $\pm$ 9 & 420 $\pm$ 88 & 170 $\pm$ 3 & 140 $\pm$ 11 & 294 $\pm$ 112 & 169 $\pm$ 20 & 118 $\pm$  50  & 156 $\pm$  48 & 145 $\pm$  49 & 143 $\pm$  46 & 114 $\pm$  48 \\
			H1reach & \textbf{6757 $\pm$ 278} & 5179$\pm$ 1789 & 4310 $\pm$ 310 & 4421 $\pm$ 1617 & 2323 $\pm$ 283 & 1855 $\pm$ 848 & 3893 $\pm$ 865 & 4431 $\pm$ 194 & 1686 $\pm$ 809 & 2142 $\pm$ 1008 & 2015 $\pm$ 937 & 1952 $\pm$ 986 & 1616 $\pm$ 751 \\
			\bottomrule
		\end{tabular}%
	}
\end{table*}

\section{Experiments}
\label{exp}
\subsection{Experimental Setup}
\textbf{Baselines}. Our method is compared and evaluated against 13 model-free continuous-control baselines. We consider four generative-policy methods: FlowRL \cite{lv2025flow}, QVPO \cite{ding2024diffusion},  DACER \cite{wang2024diffusion}, QSM \cite{psenka2023learning}; seven well-known model-free RL algorithms: SAC \cite{haarnoja2018soft}, TD3 \cite{fujimoto2018addressing}, D4PG \cite{barth2018distributed}, MPO \cite{abdolmaleki2018maximum}, A2C \cite{mnih2016asynchronous}, TRPO \cite{schulman2015trust}, and DDPG \cite{lillicrap2015continuous}; as well as the state-of-the-art RL variants BRO and BRO-Fast \cite{nauman2024bigger}. All baselines are trained using their official implementations with the default training pipelines.

\textbf{Benchmarks}. We evaluate our method on continuous-control locomotion benchmarks drawn from the DeepMind Control Suite (DMC) \cite{tassa2018deepmind}, Humanoid-Bench (H-Bench) \cite{sferrazza2024humanoidbench}, and MuJoCo Gym (MuJoCo) \cite{brockman2016openai}. All environment settings are provided in Appendix~\ref{env}.
\subsection{Experimental Results}
All curves are presented in Fig.~\ref{fig:all_tasks}, and the detailed numerical results are summarized in Table~\ref{tab:avg_final_return}.  On the DMC Suite, Dual-Flow RL improves over BRO by \textbf{+6.6\%} and FlowRL by \textbf{+10.8\%} on \textit{Dog-run}. For \textit{Humanoid-run}, Dual-Flow RL achieves a clear margin over FlowRL by \textbf{+31.6\%} and over SAC by \textbf{+112.3\%}. Moreover, Dual-Flow RL exhibits faster convergence across these tasks. On the H-Bench benchmark, Dual-Flow RL outperforms FlowRL and QVPO by \textbf{+30.5\%} and \textbf{+56.8\%} on \textit{H1-reach}. These results further confirm the superior empirical performance of Dual-Flow RL.

We additionally compare Dual-Flow RL against strong generative baselines on MuJoCo Gym. As shown in Fig.~\ref{fig:ablation_combined_nav_explore}(a) and~(b), Dual-Flow RL consistently outperforms diffusion- and flow-based methods on both \textit{Humanoid-v3} and \textit{Ant-v3}, demonstrating its advantage over competitive generative policy approaches.

Furthermore, we visualize the predicted return distributions on \textit{Dog-run}.
As shown in Fig.~\ref{fig:dogrun_return_dist}, we consider
C51~\cite{bellemare2017distributional}, IQN~\cite{dabney2018implicit}, and DSAC~\cite{duan2021distributional}. For ground truth, we fix the policy and estimate the distribution by performing repeated rollouts. For fair
comparison, we use 5000 return samples and 60 bins for each histogram. These results demonstrate that our method provides a more accurate characterization of the return distribution.

\begin{algorithm}[t]
	\caption{Dual-Flow RL}
	\label{Dual-Flow}
	\small
	\begin{algorithmic}[1]
		\STATE \textbf{Input:} replay buffer $\mathcal{D}$; discount $\gamma$; Q-flow critic $\theta$ and target $\bar\theta$; policy flow $\varphi$; regulator $\psi$; target rate $\rho$; learning rates $\beta_Q,\beta_\pi,\beta_\sigma$.
		\FOR{each training iteration}
		\FOR{each sampling step}
		\STATE Observe $s$; Sample a flow action $a\sim\pi_{\varphi}(\cdot|s)$.
		\STATE Sample $\epsilon\sim\mathcal{N}(0,I)$ and execute $a_e=\mathrm{clip}\!\left(a+\sigma_{\psi}(s)\odot\epsilon\right)$.
		\STATE Observe $(r,s',d)$ and store $(s,a_e,r,s',d)$ into $\mathcal{D}$.
		\ENDFOR
		\FOR{each update step}
		\STATE Sample mini-batch $\mathcal{B}\sim\mathcal{D}$.
		\STATE $\theta \leftarrow \theta - \beta_Q \nabla_{\theta}\mathcal{L}_{Q}(\theta)$;
		\STATE $\varphi \leftarrow \varphi - \beta_\pi \nabla_{\varphi}\mathcal{L}_{\pi}(\varphi)$;
		\STATE $\psi \leftarrow \psi - \beta_\sigma \nabla_{\psi}\mathcal{L}_{\mathrm{ECER}}(\psi)$
		\STATE $\bar\theta \leftarrow \rho\bar\theta + (1-\rho)\theta$
		\IF{$\mathrm{update}\ \mathrm{mod}\ U = 0$}
		\STATE Update $g_H$, $g_D$, and $\lambda_{\mathrm{eff}}$.
		\ENDIF
		\ENDFOR
		\ENDFOR
	\end{algorithmic}
\end{algorithm}

\subsection{Ablation Studies}
We conduct ablation studies to assess the contribution of each core component of Dual-Flow RL. Specifically, we investigate three questions:
\begin{enumerate}
	\item Does distributional value modeling provide a meaningful advantage over a standard expected-value critic?
	\item How does the proposed ECER influence exploration and overall performance?
	\item How sensitive is the method to key hyperparameters?
\end{enumerate}

\textbf{Distributional Critic and Expected-Value Critic.}
We further evaluate the effect of the value evaluation module on \textit{Dog-trot} and \textit{Humanoid-run} while keeping the policy architecture unchanged.
As shown in Fig.~\ref{fig:ablation_value_flowsteps}(a) and~(b), replacing the expected-value critic with our flow distributional critic leads to faster learning, higher final returns, and more stable performance,
which supports the effectiveness of distributional value modeling for reliable policy improvement.

\begin{figure*}[t]
	\centering
	
	% =========================================================
	% Legends: left for critic ablation, right for flow steps
	% =========================================================
	\begin{minipage}[t]{0.49\textwidth}
		\vspace{0pt}
		\centering
		\includegraphics[width=0.62\linewidth]{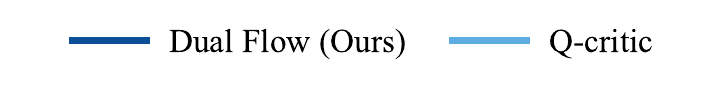}
	\end{minipage}\hfill%
	\begin{minipage}[t]{0.49\textwidth}
		\vspace{0pt}
		\centering
		\includegraphics[width=0.62\linewidth]{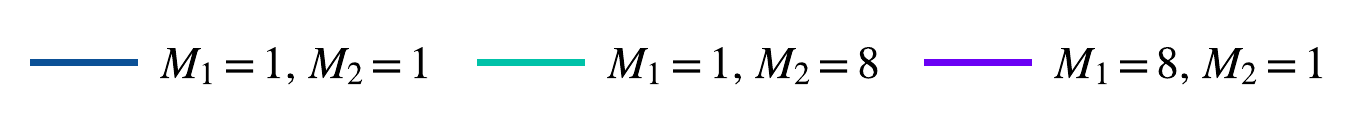}
	\end{minipage}
	
	% =========================================================
	% Left block: critic ablation
	% =========================================================
	\begin{minipage}[t]{0.49\textwidth}
		\vspace{0pt}
		\centering
		
		\begin{subfigure}[t]{0.49\linewidth}
			\centering
			\includegraphics[width=\linewidth]{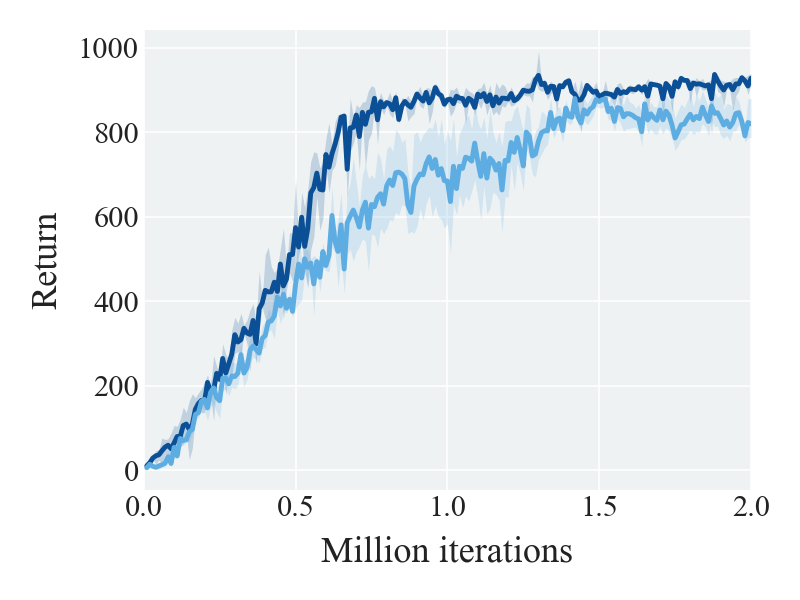}
			\caption{Dog-trot}
			\label{fig:ablation_dogtrot}
		\end{subfigure}\hfill%
		\begin{subfigure}[t]{0.49\linewidth}
			\centering
			\includegraphics[width=\linewidth]{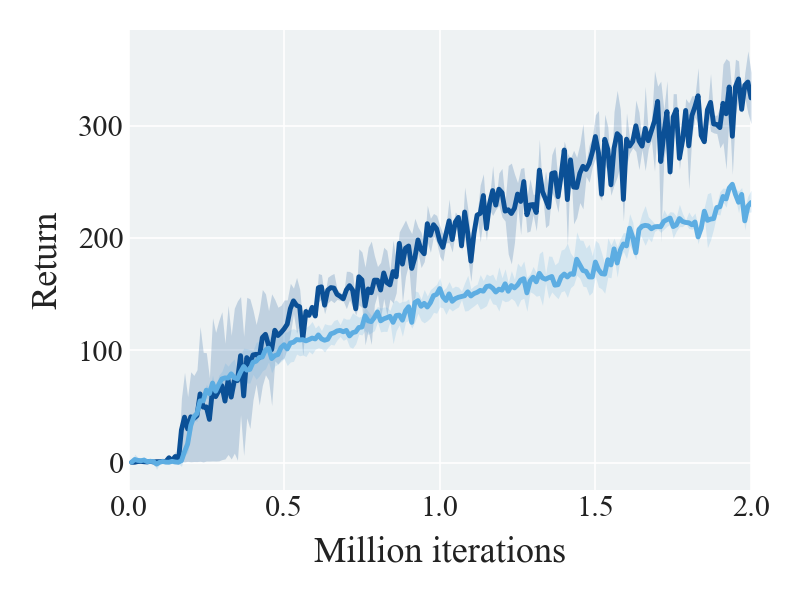}
			\caption{Humanoid-run}
			\label{fig:ablation_value_modeling}
		\end{subfigure}
	\end{minipage}\hfill%
	% =========================================================
	% Right block: flow-step ablation
	% =========================================================
	\begin{minipage}[t]{0.49\textwidth}
		\vspace{0pt}
		\centering
		
		\begin{subfigure}[t]{0.49\linewidth}
			\centering
			\includegraphics[width=\linewidth]{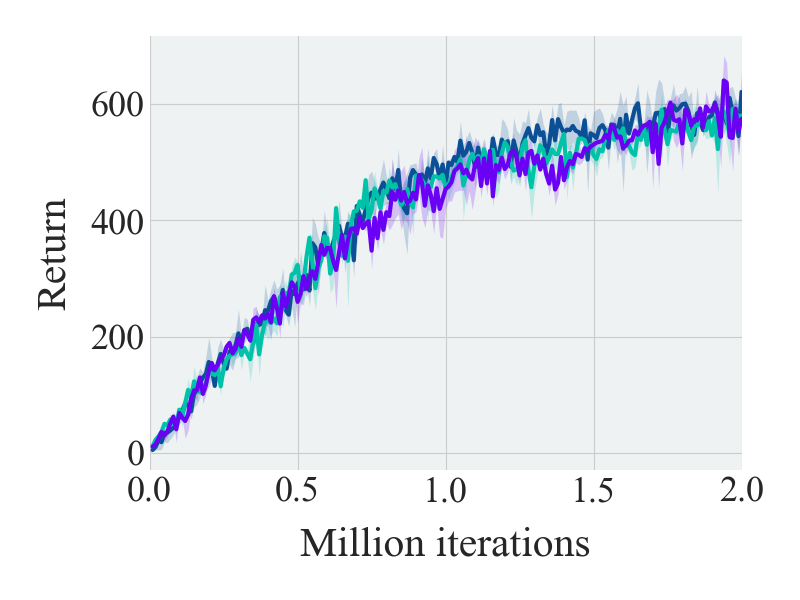}
			\caption{Learning curve}
			\label{fig:flowsteps_curve}
		\end{subfigure}\hfill%
		\begin{subfigure}[t]{0.49\linewidth}
			\centering
			\includegraphics[width=\linewidth]{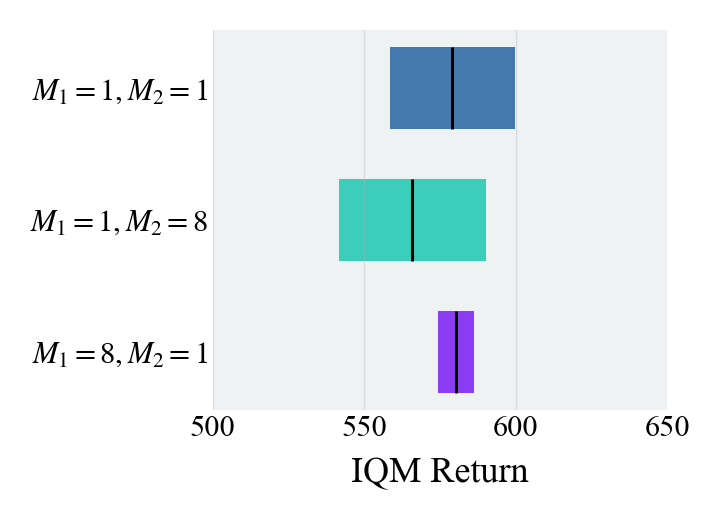}
			\caption{IQM}
			\label{fig:flowsteps}
		\end{subfigure}
	\end{minipage}
	
	\caption{
		\textbf{Ablation studies.}
		\textbf{(a)--(b) Distributional critic vs.\ expected-value critic:}
		Replacing the expected-value critic with our flow distributional critic
		leads to faster learning and higher final returns on
		\textit{Dog-trot} (a) and \textit{Humanoid-run} (b).
		\textbf{(c)--(d) Sensitivity to flow steps on \textit{Dog-run}:}
		Learning curves (c) and IQM (d) across different $M_1$ and $M_2$
		settings show that the method is robust to the choice of flow-step counts.
	}
	\label{fig:ablation_value_flowsteps}
\end{figure*}

\textbf{Exploration Regulator.} We compare different regulation mechanisms to assess the advantage of our ECER. Concretely, we consider the DACER entropy regulator~\cite{wang2024diffusion} and initial noise-scale tuning~\cite{jain2024sampling}.
For the DACER regulator, we isolate its entropy-based exploration module and integrate it into our framework: the policy output is augmented by a learned diagonal noise $\sigma_\text{out}(t) = \lambda_\text{noise}\cdot\alpha(t)$, where $\alpha(t)$ is updated via a proportional entropy feedback loop targeting $H_\text{tgt}=-c\cdot d$. For initial noise-scale tuning, we replace $a_0\sim\mathcal{N}(0,I)$ with $a_0\sim\mathcal{N}(0,\xi I)$ and sweep $\xi$.

As shown in Fig.~\ref{fig:ablation_combined_nav_explore}(c) and~(d), the DACER regulator brings modest gains, while initial noise-scale tuning yields little improvement. In contrast, ECER achieves higher final returns. We attribute this benefit to ECER's state-aware closed-loop regulation: through state-aware exploration regulation, ECER injects noise more effectively into high-value action regions, improving the quality of target samples for flow policy training and yielding consistent gains.

\textbf{Flow steps.}
We examine the method's sensitivity to flow steps on \textit{Dog-run} by independently varying the step counts $M_1$, $M_2$ for the flow-based critic and flow-based policy, respectively. As shown in Fig.~\ref{fig:ablation_value_flowsteps}(c) and~(d), the resulting learning curves are largely indistinguishable across configurations, exhibiting similar convergence behavior and final performance. In contrast, increasing the number of flow steps yields longer backpropagation-through-time (BPTT) chains through the time-unrolled flow transformations, which substantially increases computational and memory costs and prolongs training. Overall, our method appears robust to the choice of flow-step counts, and shallow inference is generally sufficient in practice for stable and efficient learning.   

\textbf{Regularization coefficient $\lambda_0$.}
We analyze sensitivity to the base regularization parameter $\lambda_0$ in ECER, which sets the baseline penalty on the exploration scale. We sweep $\lambda_0 \in \{0.05, 0.1, 0.5, 1, 5 \}$ on Humanoid-run and Dog-run; results are shown in Fig.~\ref{fig:lambda_sensitivity}. Large values restrict exploration and degrade performance, while excessively small values introduce nuisance variance. The method is robust within a moderate range.

\begin{figure*}[t]
	\centering

	\includegraphics[width=0.68\textwidth]{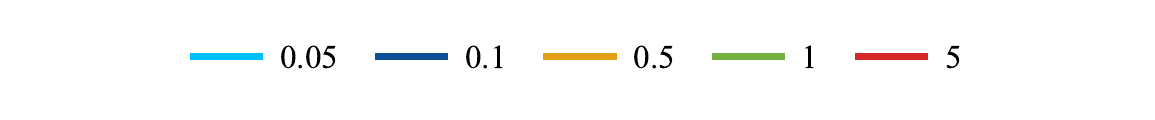}\\[-0.3em]

	\begin{subfigure}[b]{0.245\textwidth}
		\centering
		\includegraphics[width=\textwidth]{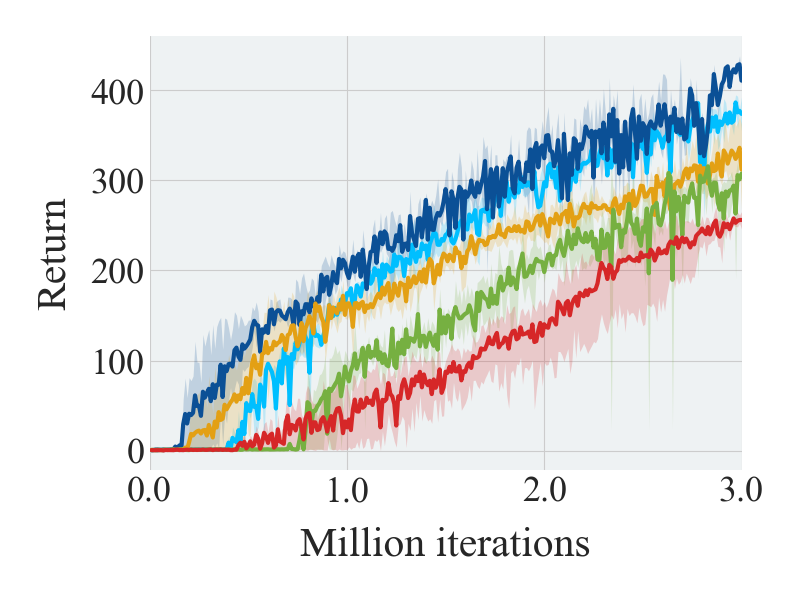}
		\caption{Learning curve}
		\label{fig:run_lamba}
	\end{subfigure}\hfill
	\begin{subfigure}[b]{0.245\textwidth}
		\centering
		\includegraphics[width=\textwidth]{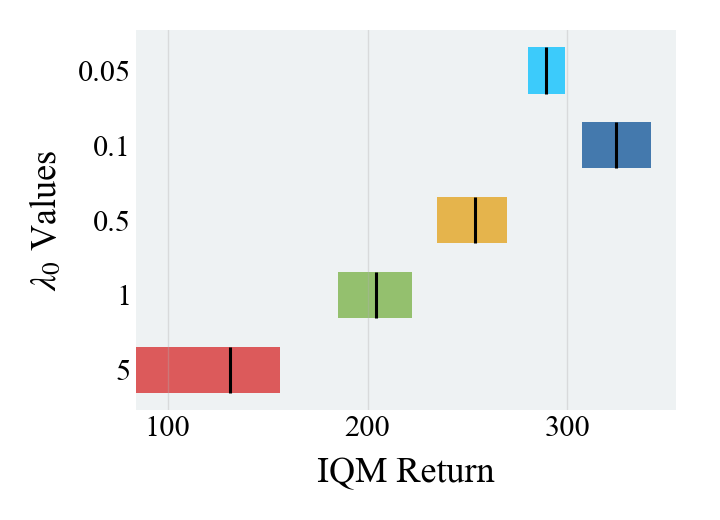}
		\caption{IQM Return}
		\label{fig:lambda_humanoid_iqm}
	\end{subfigure}\hfill
	\begin{subfigure}[b]{0.245\textwidth}
		\centering
		\includegraphics[width=\textwidth]{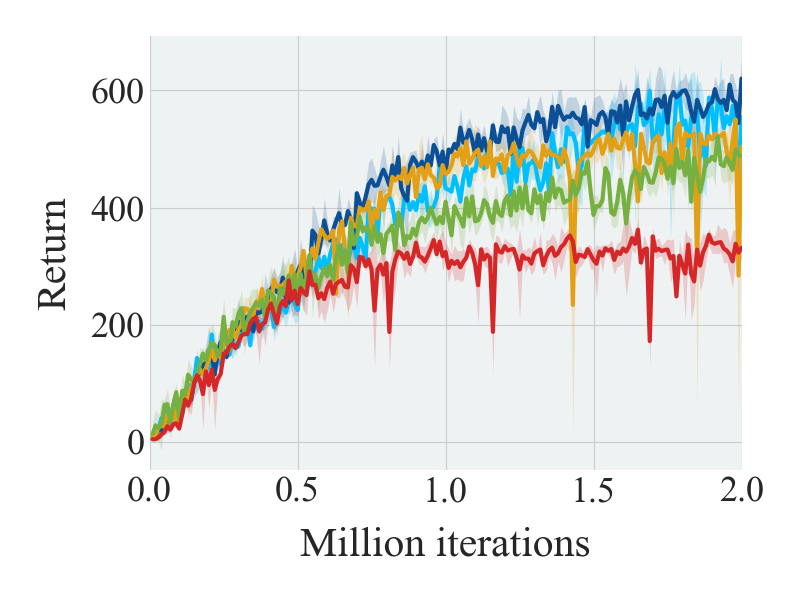}
		\caption{Learning curve}
		\label{fig:lambda_dog_curve}
	\end{subfigure}\hfill
	\begin{subfigure}[b]{0.245\textwidth}
		\centering
		\includegraphics[width=\textwidth]{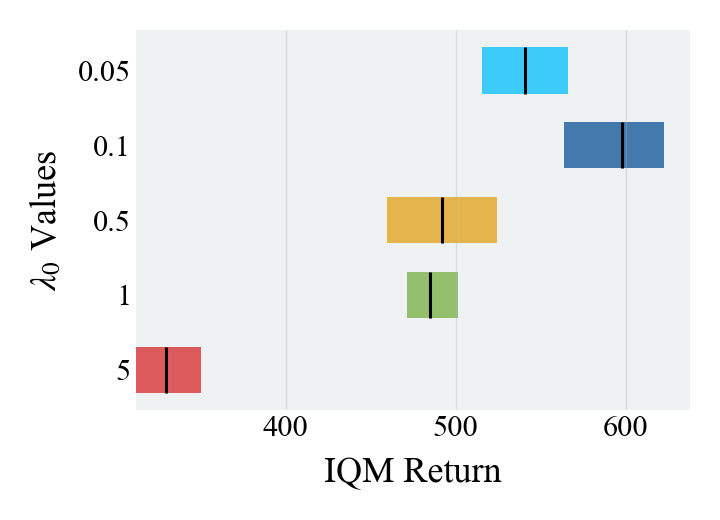}
		\caption{IQM Return}
		\label{fig:lambda_dog_iqm}
	\end{subfigure}

	\vspace{-0.3em}
	\caption{\textbf{Sensitivity analysis to the base regularization parameter $\lambda_0$.}
		(a,c) show learning curves on \textit{Humanoid-run} and \textit{Dog-run} for different $\lambda_0$. (b,d) report the corresponding aggregated final performance (IQM return) with 95\% stratified bootstrap confidence intervals. Larger $\lambda_0$ (e.g., $1,5$) restricts exploration and degrades performance, while smaller values (e.g., $0.05,0.1$) behave similarly.}
	\label{fig:lambda_sensitivity}
\end{figure*}

\section{Conclusion and Future Work}
We propose Dual-Flow RL, a unified flow-based actor-critic framework that jointly parameterizes the policy and the return distribution via CFM, aiming to provide more accurate and reliable value estimates while promoting multimodal exploration. To further strengthen exploration in online learning, we introduce an Entropy-Covariance Exploration Regulator (ECER) that enables state-aware exploration. Experiments on three benchmarks demonstrate clear improvements over strong baselines. Future work will focus on improving value-estimation efficiency and extending the framework to more challenging, risk-sensitive control domains.

\appendices

\section{GMM-Based Entropy Estimation}
\label{Derivation-1}
To obtain a tractable and differentiable estimate of policy entropy in continuous action spaces, we periodically approximate the state-conditioned action distribution of the current policy $\pi_\phi(\cdot \mid s)$ using a Gaussian Mixture Model (GMM) with diagonal covariances. For each state $s$ sampled from a batch $\{s_b\}_{b=1}^B$, we draw $N$ action samples $\{a^{(i)}\}_{i=1}^N$ from the current policy to fit a $K_m$-component diagonal GMM:
\begin{equation}
	\begin{aligned}
		&q(a \mid s) = \sum_{k_m=1}^{K_m} \pi_{k_m}(s) \mathcal{N}(a; \mu_{k_m}(s), \text{diag}(v_{k_m}(s))), \quad \\
		&\sum_{k_m=1}^{K_m} \pi_{k_m}(s) = 1, \pi_{k_m}(s) \geq 0
	\end{aligned}
\end{equation}
where $\pi_{k_m}(s)$ denotes the mixture weights, $\mu_{k_m}(s) \in \mathbb{R}^d$ is the mean vector, and $v_{k_m}(s) \in \mathbb{R}^d_{+}$ represents the diagonal variance vector (i.e., $\Sigma_{k_m}(s) = \text{diag}(v_{k_m}(s))$) for a $d$-dimensional action space.

\textbf{Analytical Approximation of State-Conditioned Entropy.}
Since the differential entropy of a GMM lacks a closed-form expression, we adopt a standard approximation~\cite{huber2008entropy}: introducing a component indicator $Z$ with $P(Z=k_m \mid s)=\pi_{k_m}(s)$, the entropy decomposes as:
\begin{equation}
	\begin{aligned}
		&H(s) \approx -\sum_{k_m=1}^{K_m} \pi_{k_m}(s) \log \pi_{k_m}(s) + \\ &\sum_{k_m=1}^{K_m} \pi_{k_m}(s) H(\mathcal{N}(\mu_{k_m}(s), \text{diag}(v_{k_m}(s)))).
	\end{aligned}
\end{equation}

\textbf{Simplification for Diagonal Gaussian Components.}
For a diagonal Gaussian, $|\mathrm{diag}(v)| = \prod_{j=1}^d v_j$, so the component entropy reduces to:
\begin{equation}
	\begin{aligned}
		&H(\mathcal{N}(\mu, \text{diag}(v))) = \frac{1}{2} \log \left( (2\pi e)^d \prod_{j=1}^{d} v_j \right) \\
		&= \frac{1}{2} \sum_{j=1}^{d} \log(2\pi e v_j).
	\end{aligned}
\end{equation}
Substituting into the approximation above yields the final entropy estimate:
\begin{equation}
	\begin{aligned}
		&H(s) \approx -\sum_{k_m=1}^{K_m} \pi_{k_m}(s) \log \pi_{k_m}(s) + \\ &\frac{1}{2} \sum_{k_m=1}^{K_m} \pi_{k_m}(s) \sum_{j=1}^{d} \log(2\pi e v_{k_m,j}(s)).
	\end{aligned}
\end{equation}
To estimate the global policy entropy over a batch of states $\{s_b\}_{b=1}^B$, we compute the individual $H(s_b)$ for each state and take the empirical mean:
\begin{equation}
	\begin{aligned}
		\hat{H} \triangleq \frac{1}{B} \sum_{b=1}^{B} H(s_b).
	\end{aligned}
\end{equation}
The estimator $\hat{H}$ characterizes the average exploration intensity of the current policy and serves as a primary input signal for subsequent adaptive exploration tuning mechanisms.

\section{Proof of Proposition 4.3 (Distributional Bellman Consistency at the Optimum)}
\label{proof-4.1}
Fix $(s, a)$, all arguments below are conditional on $(s, a)$.
\begin{lemma}[Optimal velocity under CFM]
\label{lem:1}
Notation simplification. For readability, we denote the optimal return-axis velocity field by $v^*$, i.e., $v^* \triangleq v^*_z$.

For any fixed $(t,z)$, define the conditional objective
\begin{equation}
	\begin{aligned}
		&J(v;t,z)=\mathbb{E}\big[(v-u)^2\mid t,\ z_t=z\big],\qquad v\in\mathbb{R}, \\
		 &z_t = (1 - t)z_0 + t z_1, \quad t \in [0, 1].
	\end{aligned}
\end{equation}
Then
\begin{equation}
	\begin{aligned}
		v^*(t,z)=\arg\min_{v\in\mathbb{R}}\,J(v;t,z)=\mathbb{E}\big[u\mid t,\ z_t=z\big],
	\end{aligned}
\end{equation}
and the minimizer is unique.
\end{lemma}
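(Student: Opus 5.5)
The plan is to expand the conditional objective as a quadratic in the scalar $v$ and complete the square around the conditional mean. Fix $(t,z)$ and write $m(t,z) \triangleq \mathbb{E}[u \mid t, z_t = z]$, where $u = z_1 - z_0$ with $z_1 \sim p_{\text{td}}(\cdot\mid s,a)$. First I need $J$ to be finite, so I assume $\mathbb{E}[u^2]<\infty$. This holds if the TD target has a finite second moment, which follows when rewards are bounded and $Z$ has finite variance. Under this assumption, $\mathbb{E}[u^2 \mid t, z_t = z]<\infty$ for almost every $(t,z)$ with respect to the law of $(t,z_t)$.

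The key step is the decomposition
\begin{equation*}
J(v;t,z) = \mathbb{E}\big[(u-m)^2 \mid t, z_t=z\big] + 2(m-v)\,\mathbb{E}\big[u-m \mid t, z_t=z\big] + (v-m)^2 .
\end{equation*}
The cross term vanishes by the definition of $m$, so
\begin{equation*}
J(v;t,z) = \mathrm{Var}(u \mid t, z_t=z) + (v-m(t,z))^2 .
\end{equation*}
The first term does not depend on $v$. It follows immediately that $J$ is a strictly convex quadratic in $v$. Its unique minimizer is $v = m(t,z)$, and the minimal value is the conditional variance. This gives both the characterization and the uniqueness claimed in Lemma~\ref{lem:1}.

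To connect this pointwise result to the global objective in Eq.~\eqref{15} and Assumption~\ref{ass:4.1}, I would apply the tower property:
\begin{equation*}
\mathbb{E}\big[(v(t,z_t)-u)^2\big] = \mathbb{E}_{t,z_t}\big[J(v(t,z_t);t,z_t)\big].
\end{equation*}
Any measurable field $v$ therefore has loss at least $\mathbb{E}[\mathrm{Var}(u\mid t,z_t)]$, with equality if and only if $v(t,z_t) = m(t,z_t)$ almost surely. Hence the global minimizer $v^*$ coincides with $m$ almost everywhere. Assumption~\ref{ass:4.2} then lets us select the continuous version of this field, which makes $v^*$ unique as a function and not merely an almost-everywhere equivalence class.

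The main technical point is not the algebra but the well-definedness of the conditional expectation $\mathbb{E}[u \mid t, z_t=z]$. This is a regular conditional expectation, and it is only defined up to null sets of the law of $z_t$. I would handle it in two cases. For $t\in(0,1)$, the variable $z_t = (1-t)z_0 + t z_1$ has a positive, smooth density, because $z_0$ is Gaussian and $z_1$ is independent of $z_0$. The conditional mean therefore has an explicit Bayes-formula version that is defined for every $z$. The endpoints $t\in\{0,1\}$ form a null set of $t$ and do not affect the loss.
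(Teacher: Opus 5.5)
Your argument is correct and takes essentially the paper's approach. Both reduce $J(\cdot;t,z)$ to a quadratic in the scalar $v$: you complete the square to get $\mathrm{Var}(u\mid t,z_t=z)+(v-m)^2$, while the paper sets the derivative $2v-2\mathbb{E}[u\mid t,z_t=z]$ to zero and notes the second derivative $2>0$. Your extra steps (the finite-second-moment assumption, the tower-property argument that the global minimizer of Assumption~\ref{ass:4.1} equals this pointwise minimizer almost everywhere, and the choice of a continuous version) make explicit what the paper leaves implicit when it identifies $v^*$ with the pointwise argmin.
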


\begin{IEEEproof}
Expand the square:
\begin{equation}
	\begin{aligned}
		(v-u)^2=v^2-2vu+u^2.
	\end{aligned}
\end{equation}
Taking conditional expectation given $(t,z_t=z)$ yields
\begin{equation}
	\begin{aligned}
		&J(v;t,z)=\mathbb{E}[v^2-2vu+u^2\mid t,z_t=z] \\
		&=v^2-2v\,\mathbb{E}[u\mid t,z_t=z]+\mathbb{E}[u^2\mid t,z_t=z].
	\end{aligned}
\end{equation}
The term $\mathbb{E}[u^2\mid t,z_t=z]$ is constant with respect to $v$, hence $J(\cdot;t,z)$ is a quadratic function of $v$. Differentiate:
\begin{equation}
	\begin{aligned}
		\frac{d}{dv}J(v;t,z)=2v-2\,\mathbb{E}[u\mid t,z_t=z].
	\end{aligned}
\end{equation}
Setting the derivative to zero gives the unique stationary point
\begin{equation}
	\begin{aligned}
		v=\mathbb{E}[u\mid t,z_t=z].
	\end{aligned}
\end{equation}
Moreover, the second derivative satisfies
\begin{equation}
	\begin{aligned}
		\frac{d^2}{dv^2}J(v;t,z)=2>0.
	\end{aligned}
\end{equation}
Therefore, $J(\cdot;t,z)$ is strictly convex and the stationary point is the unique global minimizer. Therefore
\begin{equation}
	\begin{aligned}
		v^*(t,z)=\arg\min_{v\in\mathbb{R}}\,\mathbb{E}[(v-u)^2\mid t,z_t=z]=\mathbb{E}[u\mid t,z_t=z].
	\end{aligned}
\end{equation}
Thus, the optimal velocity field at each fixed $(t, z)$ is given by the conditional expectation $v^*(t, z) = \mathbb{E}[u \mid t, z_t = z]$, and its uniqueness is guaranteed by the strict convexity of the loss function.
\end{IEEEproof}

\begin{lemma}[Weak transport of interpolation marginals]
\label{lem:2}
Let $\mu_t$ denote the marginal law of $z_t=(1-t)z_0+t z_1$. Then for any test function $\varphi\in C_c^\infty(\mathbb{R})$,
\begin{equation}
	\begin{aligned}
		\frac{d}{dt}\int \varphi(z)\,d\mu_t(z)=\int \varphi'(z)\,v^*(t,z)\,d\mu_t(z),\qquad \mu_0=p_0.
	\end{aligned}
\end{equation}
\end{lemma}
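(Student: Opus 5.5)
We prove Lemma~\ref{lem:2} by differentiating under the integral along the interpolation path, then replacing the conditional velocity $u=z_1-z_0$ by its conditional expectation $v^*$ from Lemma~\ref{lem:1} via the tower property. Fix $(s,a)$ and $\varphi\in C_c^\infty(\mathbb{R})$. By definition of $\mu_t$ as the law of $z_t=(1-t)z_0+tz_1$, with $z_0\sim p_0=\mathcal{N}(0,1)$ and $z_1\sim p_{\text{td}}(\cdot\mid s,a)$,
\begin{equation*}
\int\varphi(z)\,d\mu_t(z)=\mathbb{E}\big[\varphi(z_t)\big].
\end{equation*}
At $t=0$ we have $z_0\sim p_0$, so $\mu_0=p_0$ immediately.

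\textbf{Step 1 (differentiation).} For each sample path, $t\mapsto\varphi(z_t)$ is $C^1$ with derivative $\varphi'(z_t)\,\dot z_t=\varphi'(z_t)(z_1-z_0)=\varphi'(z_t)\,u$. To pass the derivative inside the expectation, use dominated convergence (or the mean value theorem on difference quotients). The difference quotients are bounded by $\|\varphi'\|_\infty\,|z_1-z_0|$. This bound is integrable provided $\mathbb{E}|z_1|<\infty$, which holds because $p_0$ is Gaussian and the TD target has finite first moment (e.g., under bounded rewards and a critic with finite mean). Hence
\begin{equation*}
\frac{d}{dt}\mathbb{E}[\varphi(z_t)]=\mathbb{E}\big[\varphi'(z_t)\,u\big].
\end{equation*}

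\textbf{Step 2 (conditioning).} By the tower property, conditioning on $z_t$ for fixed $t$,
\begin{equation*}
\mathbb{E}\big[\varphi'(z_t)\,u\big]=\mathbb{E}\big[\varphi'(z_t)\,\mathbb{E}[u\mid t,z_t]\big].
\end{equation*}
This is justified because $\varphi'(z_t)$ is bounded and $z_t$-measurable. By Lemma~\ref{lem:1}, $\mathbb{E}[u\mid t,z_t]=v^*(t,z_t)$ almost surely, so the right-hand side equals $\int\varphi'(z)\,v^*(t,z)\,d\mu_t(z)$, which is the claim. Note that Assumption~\ref{ass:4.1} ensures the minimizer over the class $\mathcal{V}$ coincides $\mu_t$-a.e.\ with this conditional expectation, so no ambiguity arises from a version choice.

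\textbf{Main obstacle.} The delicate point is the interchange of derivative and expectation, together with well-definedness at the endpoints $t\in\{0,1\}$. At $t=1$, $z_t=z_1$ may have a singular law, and $v^*$ is only defined $\mu_t$-a.e. We handle this by proving the identity for $t\in(0,1)$, where $\mu_t$ has a density since it is a convolution with a scaled Gaussian. We then state the endpoint behavior through continuity of $t\mapsto\mathbb{E}[\varphi(z_t)]$, which follows from dominated convergence.

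For the integrability bound we would use that $p_{\text{td}}$ has a finite first moment. This follows since $\gamma<1$ and the rewards are bounded, and it can be added as a standing assumption if needed.
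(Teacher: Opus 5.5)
Your proposal is correct and follows the paper's proof step for step: the chain rule along $z_t$, interchanging $d/dt$ with the expectation, the tower property conditioning on $(t,z_t)$, and Lemma~\ref{lem:1} to identify $\mathbb{E}[u\mid t,z_t]$ with $v^*(t,z_t)$. Where the paper only appeals to ``standard integrability conditions,'' you make the interchange explicit with the dominating bound $\|\varphi'\|_\infty|z_1-z_0|$ and a finite first moment of $p_{\text{td}}$, and you add a treatment of the endpoints $t\in\{0,1\}$ that the paper does not need but that does no harm.
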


\begin{IEEEproof}
Define $F(t)\triangleq \mathbb{E}[\varphi(z_t)]$. Since $z_t=(1-t)z_0+t z_1$ is differentiable in $t$ for each fixed $(z_0,z_1)$, we have
\begin{equation}
	\begin{aligned}
		\frac{dz_t}{dt}=z_1-z_0=u.
	\end{aligned}
\end{equation}
By the chain rule, for each realization,
\begin{equation}
	\begin{aligned}
		\frac{d}{dt}\varphi(z_t)=\varphi'(z_t)\frac{dz_t}{dt}=\varphi'(z_t)u.
	\end{aligned}
\end{equation}
Under standard integrability conditions, we can interchange the derivative and expectation to obtain
\begin{equation}
	\begin{aligned}
		F'(t)=\frac{d}{dt}\mathbb{E}[\varphi(z_t)]=\mathbb{E}\Big[\frac{d}{dt}\varphi(z_t)\Big]=\mathbb{E}[\varphi'(z_t)u].
	\end{aligned}
\end{equation}
Apply the tower property conditioning on $(t, z_t)$:
\begin{equation}
	\begin{aligned}
		\mathbb{E}[\varphi'(z_t)u]=\mathbb{E}\big[\,\mathbb{E}[\varphi'(z_t)u\mid t,z_t]\,\big].
	\end{aligned}
\end{equation}
The inner conditional expectation $\mathbb{E}[\varphi'(z_t)u \mid t, z_t]$ averages over the randomness in $(z_0, z_1)$ given $(t, z_t)$, aggregating contributions from all endpoint pairs that satisfy the interpolation condition $z_t = (1 - t)z_0 + t z_1$. The outer expectation then averages this quantity over the marginal law of $(t, z_t)$, where $z_t$ follows $\mu_t$.

Since $\varphi'(z_t)$ is a deterministic function of $z_t$, it is fixed once $(t, z_t)$ is given. Hence it can be taken outside the conditional expectation:
\begin{equation}
	\begin{aligned}
		\mathbb{E}[\varphi'(z_t)u\mid t,z_t]=\varphi'(z_t)\,\mathbb{E}[u\mid t,z_t].
	\end{aligned}
\end{equation}
Therefore
\begin{equation}
	\begin{aligned}
		F'(t)=\mathbb{E}[\varphi'(z_t)\,\mathbb{E}[u\mid t,z_t]]=\mathbb{E}[\varphi'(z_t)v^*(t,z_t)],
	\end{aligned}
\end{equation}
where the last equality uses Lemma 1: $\mathbb{E}[u\mid t,z_t]=v^*(t,z_t)$. Rewriting this expectation using the marginal $\mu_t$ gives
$$F'(t)=\int \varphi'(z)\,v^*(t,z)\,d\mu_t(z).$$
Finally, $\mu_0$ is the law of $z_{t=0}=z_0$, hence $\mu_0=p_0$.

In summary, Lemma 2 proves that the interpolation $z_t = (1 - t)z_0 + tz_1$ induces marginals $\{\mu_t\}_{t\in[0,1]}$ that satisfy the weak transport equation $\frac{d}{dt} \int \varphi(z) \, d\mu_t(z) = \int \varphi'(z) \, v^*(t, z) \, d\mu_t(z)$. By construction, the interpolation endpoints match $\mu_1 = p_{td}$.
\end{IEEEproof}

\begin{lemma}[Weak transport of the ODE flow]
\label{lem:3}
Let $z(t)$ solve the ODE
\begin{equation}
	\begin{aligned}
		\frac{dz}{dt}=v^*(t,z),\qquad z(0)\sim p_0,
	\end{aligned}
\end{equation}
and let $\nu_t$ be the marginal law of $z(t)$. Then for any $\varphi\in C_c^\infty(\mathbb{R})$,
\begin{equation}
	\begin{aligned}
		\frac{d}{dt}\int \varphi(z)\,d\nu_t(z)=\int \varphi'(z)\,v^*(t,z)\,d\nu_t(z),\qquad \nu_0=p_0.
	\end{aligned}
\end{equation}
\end{lemma}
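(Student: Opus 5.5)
We mirror the argument of Lemma~\ref{lem:2}, replacing the random interpolation path by the deterministic ODE trajectory. First, by Assumption~\ref{ass:4.2}, $v^*$ is continuous, bounded, and Lipschitz in $z$ uniformly in $t$. The Cauchy--Lipschitz (Picard--Lindel\"of) theorem therefore gives, for every initial value $z(0)=z_0$, a unique global solution $z(t)=\Phi_t(z_0)$ on $[0,1]$. The flow map $\Phi_t$ is measurable, indeed continuous, in $z_0$, so $\nu_t=(\Phi_t)_\#p_0$ is well defined. In particular $\nu_0=(\Phi_0)_\#p_0=p_0$, since $\Phi_0=\mathrm{id}$.

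Next, fix $\varphi\in C_c^\infty(\mathbb{R})$ and set $G(t)\triangleq\int\varphi\,d\nu_t=\mathbb{E}_{z_0\sim p_0}[\varphi(\Phi_t(z_0))]$. For each fixed $z_0$, the map $t\mapsto\Phi_t(z_0)$ is $C^1$, because $v^*$ is continuous. The chain rule then gives
\[
\frac{d}{dt}\varphi(\Phi_t(z_0))=\varphi'(\Phi_t(z_0))\,v^*(t,\Phi_t(z_0)).
\]

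The only step needing care is interchanging $d/dt$ with $\mathbb{E}_{p_0}$, and it is handled by dominated convergence. Since $\varphi'$ has compact support and $v^*$ is bounded by some $M$, the pointwise derivative is bounded by $\|\varphi'\|_\infty M$, uniformly in $t$ and $z_0$. By the mean value theorem, the difference quotients $\big(\varphi(\Phi_{t+h}(z_0))-\varphi(\Phi_t(z_0))\big)/h$ are bounded by the same constant. Dominated convergence then yields
\[
G'(t)=\mathbb{E}_{p_0}\!\big[\varphi'(\Phi_t(z_0))\,v^*(t,\Phi_t(z_0))\big].
\]
This also makes rigorous the ``standard integrability conditions'' invoked informally in Lemma~\ref{lem:2}.

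Finally, rewriting the expectation through the push-forward $\nu_t=(\Phi_t)_\#p_0$ gives $G'(t)=\int\varphi'(z)\,v^*(t,z)\,d\nu_t(z)$, which is the claimed weak transport equation. Together with Lemma~\ref{lem:2}, this shows that $\mu_t$ and $\nu_t$ solve the same weak continuity equation with the same initial datum $p_0$. I expect this to be used next, via uniqueness for continuity equations with Lipschitz velocity, to conclude $\nu_1=\mu_1=p_{\mathrm{td}}$.
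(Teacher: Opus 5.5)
Your proof is correct and follows the paper's route: the pathwise chain rule along the ODE trajectory, exchange of $d/dt$ with the expectation over $z(0)\sim p_0$, and rewriting via the marginal $\nu_t$, with the well-posedness and dominated-convergence details that the paper leaves implicit now supplied. Two small caveats: first, you should not lean on a global bound $M$, which in fact cannot hold with a Gaussian base (Lemma~\ref{lem:1} gives $v^*(0,z)=\mathbb{E}[z_1]-z$), but nothing is lost, since global existence needs only the uniform Lipschitz bound and the domination needs only $\sup_{[0,1]\times\operatorname{supp}\varphi'}|v^*|<\infty$, which continuity provides; second, your closing aside does not transfer to Lemma~\ref{lem:2}, where the pathwise velocity is $z_1-z_0$ rather than $v^*$, so the dominating function there must control $|z_1-z_0|$ (e.g.\ via a first-moment condition).
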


\begin{IEEEproof}
Define $G(t)\triangleq \mathbb{E}[\varphi(z(t))]=\int \varphi(z)\,d\nu_t(z)$. For each sample path, the chain rule gives
\begin{equation}
	\begin{aligned}
		\frac{d}{dt}\varphi(z(t))=\varphi'(z(t))\frac{dz(t)}{dt}.
	\end{aligned}
\end{equation}
Using the ODE $\frac{dz(t)}{dt}=v^*(t,z(t))$, we obtain
\begin{equation}
	\begin{aligned}
		\frac{d}{dt}\varphi(z(t))=\varphi'(z(t))\,v^*(t,z(t)).
	\end{aligned}
\end{equation}

For any test function $\varphi \in C_c^\infty$, we can differentiate under the expectation; it follows that
\begin{equation}
	\begin{aligned}
		G'(t)=\mathbb{E}\Big[\frac{d}{dt}\varphi(z(t))\Big]=\mathbb{E}[\varphi'(z(t))v^*(t,z(t))].
	\end{aligned}
\end{equation}
Rewriting the expectation with respect to the marginal $\nu_t$ yields
\begin{equation}
	\begin{aligned}
		G'(t)=	\frac{d}{dt}\int \varphi(z)\,d\nu_t(z)=\int \varphi'(z)\,v^*(t,z)\,d\nu_t(z).
	\end{aligned}
\end{equation}
The initial condition $\nu_0=p_0$ holds because $z(0)\sim p_0$.

In summary, Lemma 3 shows that the ODE-induced marginals $\{\nu_t\}$ satisfy the same weak transport identity as in Lemma 2, with $\nu_0 = p_0$.
\end{IEEEproof}

\begin{lemma}[Uniqueness of weak solutions via the backward transport equation]
\label{lem:4}
Under Assumption 4.2, the weak solution curve of the transport equation driven by $v^*(t, z)$ is unique: if two probability-measure curves $\{\eta_t\}$ and $\{\tilde{\eta}_t\}$ satisfy, for all $\varphi \in C_c^\infty(\mathbb{R})$,
\begin{equation}
	\begin{aligned}
		&\frac{d}{dt} \int \varphi(z) \, d\eta_t(z) = \int \varphi'(z) v^*(t, z) \, d\eta_t(z), \quad \\
		&\frac{d}{dt} \int \varphi(z) \, d\tilde{\eta}_t(z) = \int \varphi'(z) v^*(t, z) \, d\tilde{\eta}_t(z),
	\end{aligned}
\end{equation}
and $\eta_0 = \tilde{\eta}_0$, then $\eta_t = \tilde{\eta}_t$ for all $t \in [0, 1]$.
\end{lemma}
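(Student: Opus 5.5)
Lemma~\ref{lem:4} will be proved by the standard duality argument: test the difference of the two curves against solutions of the backward transport equation. Set $\delta_t \triangleq \eta_t - \tilde{\eta}_t$. This is a curve of finite signed measures with $\delta_0 = 0$, and by linearity it satisfies
\begin{equation*}
\frac{d}{dt}\int \varphi\,d\delta_t=\int \varphi'\,v^*(t,\cdot)\,d\delta_t
\end{equation*}
for every $\varphi\in C_c^\infty(\mathbb{R})$. The plan is to show $\int \psi\,d\delta_T=0$ for an arbitrary $\psi\in C_c^\infty(\mathbb{R})$ and $T\in(0,1]$. Since such $\psi$ separate finite measures, this gives $\eta_T=\tilde{\eta}_T$.

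Fix $\psi$ and $T$. By Assumption~\ref{ass:4.2}, $v^*$ is continuous, bounded and Lipschitz in $z$, so the flow map $X_{t,T}(z)$ (the position at time $T$ of the ODE solution starting at $z$ at time $t$) exists, is unique, and is Lipschitz in $z$ with constant $e^{L_v^z(T-t)}$. Define $\phi(t,z)\triangleq\psi(X_{t,T}(z))$. Then $\phi$ is constant along characteristics and formally solves the backward transport equation
\begin{equation*}
\partial_t\phi+v^*\,\partial_z\phi=0,\qquad \phi(T,\cdot)=\psi.
\end{equation*}
Boundedness of $v^*$ keeps $\phi(t,\cdot)$ compactly supported uniformly in $t$. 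Next, extend the weak identity from fixed test functions to time-dependent ones, $\frac{d}{dt}\int\phi(t,\cdot)\,d\delta_t=\int(\partial_t\phi+v^*\partial_z\phi)\,d\delta_t$, using the product rule in $t$. This yields $\int\psi\,d\delta_T=\int\phi(0,\cdot)\,d\delta_0=0$.

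The main obstacle is regularity. Under mere Lipschitz continuity of $v^*$ in $z$ and continuity in $t$, $\phi$ is only Lipschitz, not $C^1$, so it is not an admissible test function. I would handle this by mollification. Take $v^\varepsilon=v^*\ast\rho_\varepsilon$ in $z$; it is smooth, bounded, uniformly Lipschitz with constant $L_v^z$, and $v^\varepsilon\to v^*$ locally uniformly. Build $\phi^\varepsilon$ from the flow of $v^\varepsilon$; it is smooth and compactly supported uniformly in $\varepsilon$, with $\|\partial_z\phi^\varepsilon\|_\infty\le\|\psi'\|_\infty e^{L_v^z}$. Applying the time-dependent identity to $\phi^\varepsilon$ gives
\begin{equation*}
\int\psi\,d\delta_T=\int_0^T\!\!\int (v^*-v^\varepsilon)\,\partial_z\phi^\varepsilon\,d\delta_t\,dt .
\end{equation*}
Since $|\delta_t|$ has total mass at most $2$, the integrands live on a fixed compact set, and $\partial_z\phi^\varepsilon$ is uniformly bounded, the right side is bounded by $2T\,C\,\sup_{t,|z|\le R}|v^*-v^\varepsilon|\to0$. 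Hence $\int\psi\,d\delta_T=0$.

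One technical point must also be justified: the time-dependent product rule. I would first establish continuity of $t\mapsto\int\varphi\,d\eta_t$, which follows from the weak equation and boundedness of $v^*$. I would then approximate $\phi^\varepsilon$ by piecewise-constant-in-time interpolants, or equivalently integrate the weak equation against $\chi(t)\varphi(z)$ and use density of tensor products in $C^1_c$. With this in place the argument concludes $\eta_t=\tilde{\eta}_t$ for all $t\in[0,1]$.
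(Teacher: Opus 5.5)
Your proposal is correct and uses the same duality argument as the paper. You test against the solution of the backward transport equation with terminal datum at time $T$, show that its integral against the curve is constant in $t$, and conclude from equality at $t=0$; working with the difference $\delta_t$ rather than with each curve separately is only cosmetic. The one substantive difference is rigor: the paper asserts that Assumption~\ref{ass:4.2} yields a solution of the backward equation and then uses it directly in the weak identity as though it were a smooth, compactly supported test function, whereas your mollification of $v^*$ (with the error term $\int_0^T\!\int (v^*-v^\varepsilon)\,\partial_z\phi^\varepsilon\,d\delta_t\,dt\to 0$) and your justification of the time-dependent product rule close exactly the regularity gap that the paper's argument skips.
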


\begin{IEEEproof}
Fix any $T\in(0,1]$ and any $\varphi\in C_c^\infty(\mathbb{R})$.
Let $\psi$ denote the solution to the backward transport equation
\begin{equation}
	\begin{aligned}
		\partial_t \psi(t, z) + v^*(t, z)\partial_z \psi(t, z) = 0, \quad \psi(T, z) = \varphi(z),
	\end{aligned}
\end{equation}
whose existence and uniqueness are guaranteed by Assumption 4.2.

First, we define $$H(t)\triangleq \int \psi(t,z)\,d\eta_t(z).$$ We compute $H'(t)$.
Because $\psi$ depends on $t$ explicitly and $\eta_t$ varies with $t$, we write
\begin{equation}
	\begin{aligned}
		&\frac{d}{dt}\int \psi(t,z)\,d\eta_t(z)=\int \partial_t\psi(t,z)\,d\eta_t(z)+ \\
		&\frac{d}{dt}\int \psi(t,z)\,d\eta_t(z)\Big|_{\psi(t,\cdot) \text{fixed}}.
	\end{aligned}
\end{equation}
For the second term, treat $\psi(t,\cdot)$ as a test function in $z$ at fixed $t$. Applying the weak identity with $\varphi=\psi(t,\cdot)$ gives
\begin{equation}
	\begin{aligned}
		\frac{d}{dt}\int \psi(t,z)\,d\eta_t(z)\Big|_{\psi(t,\cdot) \text{fixed}}=\int \partial_z\psi(t,z)\,v^*(t,z)\,d\eta_t(z).
	\end{aligned}
\end{equation}
Hence
\begin{equation}
	\begin{aligned}
		&H'(t)=\int \partial_t\psi(t,z)\,d\eta_t(z)+\int \partial_z\psi(t,z)\,v^*(t,z)\,d\eta_t(z) \\
		&=\int\big(\partial_t\psi+v^*\partial_z\psi\big)\,d\eta_t.
	\end{aligned}
\end{equation}
Since $\psi$ solves the backward PDE, $\partial_t\psi+v^*\partial_z\psi=0$, and thus $H'(t)=0$. Therefore $H(t)$ is constant and $H(T)=H(0)$.
Next, we relate the terminal and initial expectations.
Using $\psi(T,z)=\varphi(z)$, we have
\begin{equation}
	\begin{aligned}
		&H(T)=\int \psi(T,z)\,d\eta_T(z)=\int \varphi(z)\,d\eta_T(z),\\
		&H(0)=\int \psi(0,z)\,d\eta_0(z).
	\end{aligned}
\end{equation}
Thus
\begin{equation}
	\begin{aligned}
		\int \varphi(z)\,d\eta_T(z)=\int \psi(0,z)\,d\eta_0(z).
	\end{aligned}
\end{equation}
Applying the same argument to $\tilde\eta_t$ yields
\begin{equation}
	\begin{aligned}
		\int \varphi(z)\,d\tilde\eta_T(z)=\int \psi(0,z)\,d\tilde\eta_0(z).
	\end{aligned}
\end{equation}
If $\eta_0=\tilde\eta_0$, then for all $\varphi\in C_c^\infty(\mathbb{R})$,
\begin{equation}
	\begin{aligned}
		\int \varphi(z)\,d\eta_T(z)=\int \varphi(z)\,d\tilde\eta_T(z).
	\end{aligned}
\end{equation}
Finally, since two probability measures agree on all test functions $\varphi\in C_c^\infty(\mathbb{R})$, then the measures are equal. Hence $\eta_T=\tilde\eta_T$. Since $T$ is arbitrary, $\eta_t=\tilde\eta_t$ for all $t\in[0,1]$.
\end{IEEEproof}

By Lemma 2 and Lemma 3, both $\mu_t$ (interpolation marginals) and $\nu_t$ (ODE pushforward marginals) satisfy the same weak transport identity with the same initial condition $p_0$. By Lemma 4 (uniqueness), it follows that
\begin{equation}
	\begin{aligned}
		\mu_t=\nu_t\quad \forall\,t\in[0,1].
	\end{aligned}
\end{equation}
In particular, $\nu_1=\mu_1$. Under the interpolation construction, $z_{t=1}=z_1\sim p_{\mathrm{td}}$, hence $\mu_1=p_{\mathrm{td}}$. Therefore $\nu_1=p_{\mathrm{td}}$, i.e., the probability-flow ODE induced by the globally optimal CFM field transports $p_0$ to $p_{\mathrm{td}}$.  Since $p_{\text{td}}(\cdot \mid s, a) \doteq (\mathcal{T}^\pi Z)(\cdot \mid s, a)$ and $Z(\cdot \mid s, a)$ is the distribution obtained by transporting $p_0$ through ODE, we have $Z(\cdot \mid s, a) = (\mathcal{T}^\pi Z)(\cdot \mid s, a)$.

\section{Implementation Details}
\label{env}
\textbf{Environments:} We evaluate our method on continuous-control benchmarks from the DeepMind Control Suite (DMC), Humanoid-Bench (H-Bench), and MuJoCo Gym; all environments are visualized in Fig.~\ref{fig:all_environments}. All experiments use four NVIDIA RTX 3090 GPUs and results are averaged over five random seeds.

\begin{figure*}[t]
	\centering
	\includegraphics[width=0.95\textwidth]{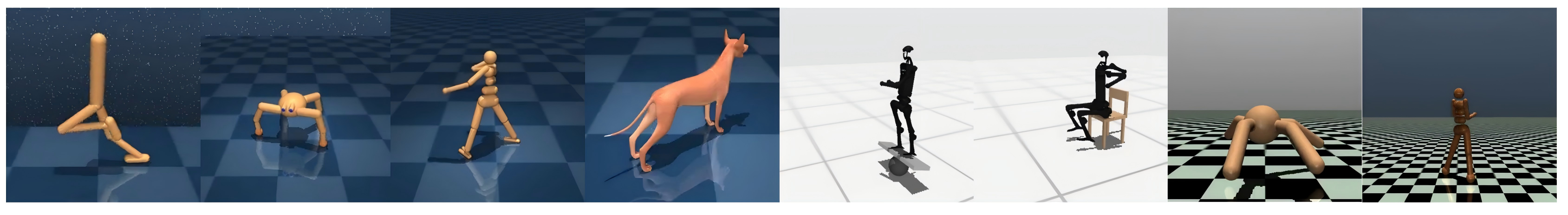}

\caption{\textbf{Visualizations of the considered environments.}
	From the DeepMind Control Suite (DMC)~\cite{tassa2018deepmind}, we include ten locomotion tasks: \textit{Humanoid} (Run/Stand), \textit{Dog} (Run/Trot/Stand/Walk), \textit{Walker} (Run/Walk/Stand), and \textit{Quadruped} (Walk).
	From H-Bench~\cite{sferrazza2024humanoidbench}, we include \textit{H1-sit\_hard} and \textit{H1-balance\_simple}.
	From MuJoCo Gym~\cite{brockman2016openai}, we include \textit{Humanoid-v3} and \textit{Ant-v3}.
	These environments are characterized by high-dimensional state-action spaces and complex contact dynamics.}
	\label{fig:all_environments}
\end{figure*}

%\textbf{Policy Flow Network:} The policy is parameterized as a time-conditioned velocity field $v_\phi$, taking $[s, a, t] \in \mathbb{R}^{d_s+d_a+1}$ as input, where $t$ is a raw scalar without Fourier embeddings. The architecture is a 3-layer MLP with two hidden layers of width $H=256$, each followed by Layer Normalization and ELU, and a linear output layer in $\mathbb{R}^{d_a}$.
%
%\textbf{Flow Distributional Critic Network:} Implemented as a one-dimensional conditional flow with velocity field $v_\theta$, taking $[s, a, z, t] \in \mathbb{R}^{d_s+d_a+2}$ as input ($t$ as raw scalar). The same 3-layer MLP structure (width $H=256$, LayerNorm, ELU) produces a scalar output, directly parameterizing the velocity field of the conditional return distribution rather than its expectation or quantiles.
\textbf{Policy Flow Network:} The policy is parameterized as a time-conditioned velocity field $v_\phi$, with parameters independent of all other modules. The network takes as input the concatenation of state, action, and time, $[s, a, t] \in \mathbb{R}^{d_s+d_a+1}$, where the time variable $t$ is directly concatenated as a raw scalar without any additional time embeddings or Fourier features. The architecture is a three-layer fully connected MLP with two hidden layers of width $H = 256$, each followed by Layer Normalization and an ELU activation, and a linear output layer producing a vector in $\mathbb{R}^{d_a}$. The network outputs an action-space velocity field.

\textbf{Flow Distributional Critic Network:} The flow distributional critic is implemented as a one-dimensional conditional flow, with its velocity field network $v_\theta$ parameterized independently from the policy. The input consists of the concatenation of state, action, return variable, and time, $[s, a, z, t] \in \mathbb{R}^{d_s+d_a+2}$, where the time variable is again provided as a raw scalar. The network is a three-layer fully connected MLP with two hidden layers of width $H = 256$, each followed by Layer Normalization and ELU activations, and a linear output layer producing a single scalar. This architecture directly parameterizes the velocity field of the conditional return distribution, rather than its expectation or quantiles.

\textbf{Entropy-Covariance Exploration Regulator:} The ECER is implemented as a state-conditioned exploration scale network $\sigma_\psi(s)$, with parameters independent of both the policy and value networks. The network takes $s \in \mathbb{R}^{d_s}$ as input and employs a 3-layer MLP with two hidden layers of width $H=256$, each followed by Layer Normalization and ELU, and a linear output layer producing a vector of dimension $d_a$. The output is mapped via an exponential function and clipped to a preset range, serving as the per-dimension exploration scale to modulate the injected noise intensity. All hyperparameters used across experiments are summarized in Table~\ref{tab:dualflow_hparams_block}.

\begin{table*}[t]
	\centering
	\caption{Hyperparameter settings for \textbf{Dual-Flow RL} (ours).}
	\label{tab:dualflow_hparams_block}
	\small
	\setlength{\tabcolsep}{6pt}
	\renewcommand{\arraystretch}{1.08}
	\begin{tabular}{l c | l c}
		\toprule
		\textbf{Hyperparameter} & \textbf{Value} & \textbf{Hyperparameter} & \textbf{Value} \\
		\midrule
		\multicolumn{4}{l}{\textbf{Training}} \\
		\midrule
		Discount $\gamma$ & 0.99 & Polyak rate $\tau$ & 0.005 \\
		Replay buffer size $|\mathcal{D}|$ & 1e6 & Batch size $B$ & 256 \\
		Optimizer & Adam & Learning rate $(\beta_Q,\beta_\pi)$ & $3\times 10^{-4}$ \\
		Target entropy $\mathcal{H}_{\rm tgt}$ & $-2.2\,\mathrm{dim}(\mathcal{A})$ & Warmup updates $T_{\text{distill}}$ & 2e5 \\
		Update interval $U$ of $\lambda_{\mathrm{eff}}$ & 1e4 & Warmup updates $T_{\text{ECER}}$ & 2e5\\
		\midrule
		\multicolumn{4}{l}{\textbf{Network Architecture}} \\
		\midrule
		Critic depth / width & 2 / 256 & Policy depth / width & 2 / 256 \\
		Critic flow steps $M_1$ & 1 & Policy flow steps $M_2$ & 1 \\
		Prior distribution $p_0$ & $\mathcal{N}(0,I)$ & Samples $K$ & 16 \\
		\midrule
		\multicolumn{4}{l}{\textbf{Entropy-Covariance Exploration Regulator (ECER)}} \\
		\midrule
		Entropy update interval $U$ & 1e4 & GMM state batch size $B_H$ & 32 \\
		Actions per state $N$ & 200 & GMM components $K_m$ & 3 \\
		Entropy EMA $\beta_H$ & 0.95 & EMA $\beta_{\rho}$ & 0.97 \\
		Cov gate coeff.\ $k_D$ & 10.0 & Cov gate cap $g_{D,\max}$ & 2.0 \\
		Threshold $\rho_{\rm thr}$ & 0.10 & Threshold $\Delta\rho_{\rm thr}$ & 0.003 \\
		Combined gate cap $g_{\max}$ & 3.0 & $\lambda_0$ & 0.1 \\
		\bottomrule
	\end{tabular}
\end{table*}

%\section*{Acknowledgment}
%The authors would like to thank the anonymous reviewers for their constructive comments.

\bibliographystyle{IEEEtran}
\bibliography{refs}

\end{document}